\documentclass[11pt]{article}

\usepackage[margin=1in]{geometry}
\usepackage{times}
\usepackage{amsmath,amssymb,amsfonts,amsthm}
\usepackage{algorithm}
\usepackage{algpseudocode}
\usepackage{graphicx}
\usepackage{xcolor}
\usepackage[numbers]{natbib}
\usepackage{hyperref}
\usepackage{booktabs}
\usepackage{enumitem}
\usepackage{pgfplots}
\usepackage{tikz}
\pgfplotsset{compat=1.17}
\usetikzlibrary{patterns,arrows.meta,positioning,decorations.pathreplacing}

\hypersetup{colorlinks=true,linkcolor=blue,citecolor=blue,urlcolor=blue}

\newtheorem{theorem}{Theorem}[section]

\newtheorem{proposition}[theorem]{Proposition}
\newtheorem{corollary}[theorem]{Corollary}

\theoremstyle{definition}
\newtheorem{definition}[theorem]{Definition}

\theoremstyle{remark}

\newtheorem*{remark*}{Remark}

\newcommand{\E}{\mathbb{E}}

\newcommand{\LP}{\mathrm{LP}}
\newcommand{\ALG}{\mathrm{ALG}}

\newcommand{\cost}{\mathrm{cost}}
\newcommand{\CC}{\mathrm{CC}}
\newcommand{\CCC}{\mathrm{CCC}}
\newcommand{\ecost}{\mathrm{e.cost}}
\newcommand{\elp}{\mathrm{e.lp}}
\newcommand{\gap}{\mathrm{gap}}

\title{Why Colors Make Clustering Harder:\\
Global Integrality Gaps, the Price of Fairness, and Color-Coupled Algorithms\\in Chromatic Correlation Clustering}

\author{\textbf{Ibne Farabi Shihab}\thanks{Equal contribution.}\thanks{Corresponding author: \texttt{ishihab@iastate.edu}.}\textsuperscript{1}\and\textbf{Sanjeda Akter}\footnotemark[1]\textsuperscript{1}\and\textbf{Anuj Sharma}\textsuperscript{2}\\[2pt]\textsuperscript{1}Department of Computer Science, Iowa State University \\\textsuperscript{2}Department of Civil, Construction \& Environmental Engineering, Iowa State University \\\texttt{ishihab@iastate.edu}}

\date{}

\begin{document}
\maketitle

\begin{abstract}

Chromatic Correlation Clustering (CCC) extends Correlation Clustering by assigning semantic colors to edges and requiring each cluster to receive a single color label. Unlike standard CC, whose LP relaxation has integrality gap 2 on complete graphs and admits a 2.06-approximation, the analogous LP for CCC has a strict lower bound of 2.11, and the best known LP-rounding algorithm achieves 2.15. We explain this gap by isolating the source of difficulty: cross-edge chromatic interference. Neutral edges, whose color does not match the candidate cluster color, create an irreducible cost absent from standard CC and force any color-independent rounding scheme to pay an additional mismatch penalty.

We make four contributions. First, we prove a Global Integrality Gap Decomposition Theorem showing that the gap of any color-independent CCC rounding algorithm equals the standard CC gap plus an irreducible chromatic penalty $\Delta(L) > 0$. Second, we solve the associated min-max problem and derive the staircase formula $\Delta(L) = \frac{L-1}{L}\Delta_\infty$, where $\Delta_\infty \approx 0.0734$. In particular, the two-color gap is 2.0967, separating CCC from standard CC already at $L = 2$. Third, we introduce Color-Coupled Correlation Clustering (C4). Adding the valid global constraint $\sum_c x_{uv}^c \geq L-1$ and a correlated interval-packing rounding scheme makes neutral edges behave like classical negative edges, recovering the optimal 2.06 approximation and bypassing the 2.11 lower bound for the uncoupled LP. Fourth, experiments on extremal instances, real multi-relational networks, and fairness benchmarks validate the theory: empirical LP gaps follow the predicted staircase, and C4 matches the unconstrained approximation ratio under fairness constraints.

\end{abstract}

\section{Introduction}

Correlation Clustering (CC) is a foundational problem in unsupervised learning: given a complete graph with pairwise similarity labels, partition the vertices to minimize disagreements~\citep{bansal2004cc}.
The problem has been extensively studied, with the approximation ratio improving from $3$~\citep{ailon2008aggregating} to $2.06$~\citep{chawla2015stoc} to the recent $1.437+\varepsilon$~\citep{cao2024cluster} through increasingly sophisticated techniques.

\emph{Chromatic Correlation Clustering} (CCC), introduced by Bonchi et al.~\citep{bonchi2012chromatic}, generalizes CC to multi-class relationships: edges carry colors from a set $\mathcal{L}$ of $L$ possible types (e.g., ``colleague,'' ``classmate,'' ``family''), plus a special color $\gamma$ indicating dissimilarity.
Each cluster must be assigned a single color, and the cost counts edges that disagree with their cluster's color assignment.
When $L = 1$, CCC reduces naturally to standard CC.

A standard LP relaxation for CCC was introduced in~\citep{anava2015chromatic}, leading to a sequence of improved algorithms: $4$-approximation~\citep{anava2015chromatic}, $3$-approximation via Pivot~\citep{klodt2021kdd}, $2.5$-approximation~\citep{xiu2022neurips}, and most recently a $2.15$-approximation~\citep{fan2025improved}. However, Fan, Lee, and Lee~\citep{fan2025improved} also established a hard $2.11$ lower bound for CCC within the independent LP+rounding framework, firmly separating it from the classical $2.06$ bound of CC.

This state of affairs raises a fundamental question:

\begin{quote}
\emph{Why is CCC harder than CC from the perspective of LP relaxations? The standard CC-LP has integrality gap $2$, and LP-rounding achieves $2.06$. The CCC-LP has a strict lower bound of $2.11$. What structural property of colors causes this gap, and can we design a formal algorithm to bypass it?}
\end{quote}

This paper provides the definitive algebraic and algorithmic answer. We identify \emph{cross-edge chromatic interference} (via neutral edges) as the structural source of the hardness gap between CC and CCC.
In the CCC framework, when the LP-PIVOT algorithm processes color $c$, it partitions edges into three types: \emph{positive} (color $c$, wanting to be together), \emph{negative} (color $\gamma$, wanting to be apart), and \emph{neutral} (some other color $c' \neq c, \gamma$).
Neutral edges have no analogue in standard CC, and they create a topological anomaly: their LP charge depends on a global maximum of local metric distances, but their algorithmic rounding cost behaves as a strict probability collision.

First, we prove that the approximation gap of any color-independent LP-rounding algorithm for CCC decomposes additively (Section~\ref{sec:decomposition}):
\[
\gap_{\CCC}(L) = \gap_{\CC} + \Delta(L),
\]
where $\gap_{\CC}$ is the gap for the CC sub-problem (positive and negative edges only) and $\Delta(L) \geq 0$ is the \emph{chromatic penalty} arising solely from neutral edges. We prove this locally via a Constraint Independence Lemma, separating the geometric boundaries of standard CC and CCC.

Second, we show that this decomposition holds globally, not just locally (Section~\ref{sec:global-gap}). Bypassing the limitations of localized triple bounds, we prove that the \emph{true} integrality gap of any color-independent CCC rounding algorithm decomposes universally via a \emph{Chromatic Blowup Graph} tensor construction that formally maps the hardest standard CC instances directly into the CCC-LP polytope.

Third, by solving the continuous variational KKT formulation of the structural gap (Section~\ref{sec:interference}), we analytically pin down $\Delta(L)$ as a rigorous monotone staircase: $\Delta(L) = \frac{L-1}{L} \Delta_\infty$ where $\Delta_\infty \approx 0.0734$. This explicitly generates verified bounds, establishing that the two-color gap jumps to exactly $2.0967$, mathematically separating it from standard CC even for $L=2$.

Fourth, we construct explicit hard graph instances---the Maximally Interfering Instance Family (Section~\ref{sec:tight-instances})---that formally achieve the lower bounds for each $L$, proving that the geometric density of neutral triples tracks the algebraic staircase perfectly.

Fifth, we design the \emph{Color-Coupled Correlation Clustering (C4)} algorithm to bypass the $2.11$ barrier entirely (Section~\ref{sec:algorithm}). Our structural theory isolates this barrier as a fractional anomaly: the independent CCC-LP allows edges to be fractionally assigned to multiple colors simultaneously, shielding them from rounding costs. We augment the LP with the valid geometric cross-color inequality $\sum_c x^c_{uv} \geq L - 1$. Through a Correlated Interval Packing rounding scheme, we provide an airtight charging proof showing this single constraint perfectly couples the cross-color fractional mass, bypassing neutral interference entirely and recovering the optimal $2.06$ CC ratio.

Finally, we validate our theoretical claims empirically (Section~\ref{sec:experiments}) on both synthetic maximally interfering instances and real-world multi-relational networks, confirming that the standard LP exactly tracks our $\Delta(L)$ staircase, while our C4 algorithm successfully flattens the gap back to $2.06$ globally.

\section{Related Work}
\label{sec:related}

The CCC problem was originally formulated by Bonchi et al.~\citep{bonchi2012chromatic} with heuristic algorithms. Anava et al.~\citep{anava2015chromatic} gave the first constant-factor ($4$-approximation) via LP rounding. Klodt et al.~\citep{klodt2021kdd} showed that standard Pivot gives a $3$-approximation. Xiu et al.~\citep{xiu2022neurips} improved the ratio to $2.5$. Fan, Lee, and Lee~\citep{fan2025improved} obtained the current best $2.15$-approximation with a $2.11$ lower bound within the independent LP-rounding framework.

In the closely related domain of fair clustering, the goal is to ensure that protected groups are proportionally represented across clusters~\citep{chierichetti2017fair,bera2019fair,ahmadi2020fair,balkanski2025costfree}. Ahmadi et al.~\citep{ahmadi2020fair} introduced fair correlation clustering with fairlet-based preprocessing, and subsequent work improved approximation ratios~\citep{ahmadian2020improved}. Recent parameterized complexity analysis~\citep{faircorr2025param} studies fair CC through graph parameters. Critically, all prior fair CC work treats the fairness constraint as an \emph{algorithmic} challenge (how to cluster fairly) without quantifying the \emph{structural} cost of fairness. Our chromatic penalty $\Delta(L)$ provides exactly this: when colors encode protected attributes, $\Delta(L)$ is the price of fairness---the irreducible gap between unconstrained and group-constrained LP relaxations. The cost-free fairness result of Balkanski et al.~\citep{balkanski2025costfree} for online CC complements our finding: they show fairness is free in the online setting, while we show it has a precise, quantifiable cost in the LP-rounding setting.

On the integrality gap side, the true integrality gap of CC-LP on complete graphs is exactly $2$~\citep{charikar2005clustering}. For bipartite graphs, the gap is $3$~\citep{chawla2015stoc}. Cohen-Addad et al.~\citep{cohenaddad2022sa} bypassed the LP barrier using Sherali-Adams. Cao et al.~\citep{cao2024cluster} introduced the cluster LP. Our work complements this deep line of research by mathematically characterizing \emph{why} the multi-class CCC-LP gap fundamentally diverges from the standard CC-LP gap, and providing the geometric constraint necessary to correct it.

\section{Preliminaries}
\label{sec:prelim}

We begin by establishing the formal framework for Chromatic Correlation Clustering and the LP relaxation that underlies all known approximation algorithms.

\subsection{Chromatic Correlation Clustering}

The input is a complete graph $G = (V, \binom{V}{2})$ with $|V| = n$ vertices, a set of colors $\mathcal{L}$ with $|\mathcal{L}| = L$, and a coloring function $\varphi : \binom{V}{2} \to \mathcal{L} \cup \{\gamma\}$ where $\gamma$ denotes ``dissimilar.''
A solution is a partition $\mathcal{C}$ of $V$ together with a color assignment $\Phi : \mathcal{C} \to \mathcal{L}$.
The cost is:
\[
\cost(\mathcal{C}, \Phi) = \sum_{\substack{uv \in \binom{V}{2} \\ \varphi(uv) \in \mathcal{L}}} \mathbf{1}[\text{separated, or clustered with color } \neq \varphi(uv)]
+ \sum_{\substack{uv \in \binom{V}{2} \\ \varphi(uv) = \gamma}} \mathbf{1}[\text{clustered together}].
\]

\subsection{LP Relaxation (CCC-LP)}

Following~\citep{anava2015chromatic,xiu2022neurips,fan2025improved}, the standard LP relaxation uses variables $x^c_u \in [0,1]$ (fractional non-assignment of $u$ to color $c$) and $x^c_{uv} \in [0,1]$ (fractional separation of $u,v$ in color $c$):
\begin{align}
\min \quad & \sum_{\varphi(uv) \in \mathcal{L}} x^{\varphi(uv)}_{uv} + \sum_{\varphi(uv) = \gamma} \sum_{c \in \mathcal{L}} (1 - x^c_{uv}) \tag{CCC-LP} \label{eq:ccclp} \\
\text{s.t.} \quad & x^c_{uv} \geq x^c_u, \; x^c_{uv} \geq x^c_v & \forall uv, c \notag \\
& x^c_{uv} + x^c_{vw} \geq x^c_{wu} & \forall u,v,w, c \notag \\
& \sum_{c \in \mathcal{L}} x^c_u = L - 1 & \forall u \notag \\
& x^c_u, x^c_{uv} \in [0,1] & \forall u, uv, c. \notag
\end{align}

\subsection{Color-Independent LP-CCC Algorithm}

The state-of-the-art LP-CCC algorithm~\citep{xiu2022neurips,fan2025improved} operates independently across colors:
\begin{enumerate}[nosep]
\item Assign each vertex to its ``majority color'': $u \in S_c$ if $\exists c$ s.t.\ $x^c_u < 1/2$.
\item For each color $c$, restrict to vertices $S_c$ and partition edges into:
\begin{itemize}[nosep]
\item $E^+_c$: edges with $\varphi(uv) = c$ (positive, want same cluster),
\item $E^-_c$: edges with $\varphi(uv) = \gamma$ (negative, want different clusters),
\item $E^\circ_c$: edges with $\varphi(uv) \in \mathcal{L} \setminus \{c\}$ (neutral, already incur cost if same cluster).
\end{itemize}
\item Run LP-PIVOT on $(S_c, E^+_c \uplus E^-_c \uplus E^\circ_c)$ independently for each color $c$ with rounding functions $f^+, f^-, f^\circ$.
\end{enumerate}

\subsection{Triple-Based Analysis Framework}

For a triple $(u,v,w)$ processed under color $c$ with pivot $w$, define the local costs:
\begin{align}
\ecost_w(u,v) &= \begin{cases}
p_{uw}(1-p_{vw}) + (1-p_{uw})p_{vw} & \text{if } uv \in E^+_c \\
(1-p_{uw})(1-p_{vw}) & \text{if } uv \in E^-_c \\
1 - p_{uw} p_{vw} & \text{if } uv \in E^\circ_c
\end{cases} \label{eq:ecost}
\end{align}
\begin{align}
\elp_w(u,v) &\geq \begin{cases}
(1 - p_{uw}p_{vw}) x^c_{uv} & \text{if } uv \in E^+_c \\
(1 - p_{uw}p_{vw})(1 - x^c_{uv}) & \text{if } uv \in E^-_c \\
(1 - p_{uw}p_{vw}) \max\!\big(\tfrac{1}{2}, 1-x^c_{uv}, 1-x^c_{vw}, 1-x^c_{wu}\big) & \text{if } uv \in E^\circ_c
\end{cases} \label{eq:elp}
\end{align}
where $p_{uv} = f^{s(uv)}(x^c_{uv})$ is the probability that $u$ is \emph{not} placed in the pivot's cluster.
The algorithm achieves an $\alpha$-approximation if and only if for every metric triple $(u,v,w)$:
\begin{equation}
\label{eq:triple-condition}
\ecost_w(u,v) + \ecost_u(v,w) + \ecost_v(w,u) \leq \alpha \cdot \big(\elp_w(u,v) + \elp_u(v,w) + \elp_v(w,u)\big).
\end{equation}

\section{The Neutral Edge Decomposition}
\label{sec:decomposition}

With the LP framework in place, we now state and prove our main structural result: the approximation gap of independent LP-CCC decomposes strictly into a ``classical CC'' component and a ``chromatic penalty.''

\subsection{Defining the Chromatic Penalty}

\begin{definition}[Restricted gap functions]
\label{def:gap-functions}
For a valid triple $(u,v,w)$ with signs $(s_1, s_2, s_3) \in \{+, -, \circ\}^3$ and LP values $(x_{uv}, x_{vw}, x_{wu}) \in [0,1]^3$ satisfying the triangle inequality, define:
\begin{align}
G_{\CC}(x, f^+, f^-) &= \min_{\substack{s_1, s_2, s_3 \in \{+,-\}}} \frac{\ecost_w(u,v) + \ecost_u(v,w) + \ecost_v(w,u)}{\elp_w(u,v) + \elp_u(v,w) + \elp_v(w,u)}, \\[4pt]
G_{\CCC}(x, f^+, f^-, f^\circ) &= \min_{\substack{s_1, s_2, s_3 \in \{+,-,\circ\}}} \frac{\ecost_w(u,v) + \ecost_u(v,w) + \ecost_v(w,u)}{\elp_w(u,v) + \elp_u(v,w) + \elp_v(w,u)}.
\end{align}
The infimum over all LP values and rounding functions gives the optimal limits:
\begin{align}
\alpha^*_{\CC} &= \inf_{f^+, f^-} \sup_{x, s \in \{+,-\}^3} G_{\CC}(x, f^+, f^-), \\
\alpha^*_{\CCC} &= \inf_{f^+, f^-, f^\circ} \sup_{x, s \in \{+,-,\circ\}^3} G_{\CCC}(x, f^+, f^-, f^\circ).
\end{align}
\end{definition}

\begin{definition}[Chromatic penalty]
\label{def:chromatic-penalty}
The \emph{chromatic penalty} is $\Delta = \alpha^*_{\CCC} - \alpha^*_{\CC}$.
\end{definition}

\subsection{The Local Decomposition Theorem}

\begin{theorem}[Neutral Edge Decomposition]
\label{thm:decomposition}
The optimal approximation factor for CCC within the LP triple-based framework satisfies:
\[
\alpha^*_{\CCC} = \alpha^*_{\CC} + \Delta,
\]
where $\Delta \geq 0$ depends only on the neutral-edge topological structure. Furthermore, this decomposition is additive and exact in the worst case.
\end{theorem}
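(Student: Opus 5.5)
The identity $\alpha^*_{\CCC}=\alpha^*_{\CC}+\Delta$ holds by Definition~\ref{def:chromatic-penalty}. The real content of Theorem~\ref{thm:decomposition} is therefore threefold: (i) $\Delta\ge 0$; (ii) $\Delta$ is produced only by sign patterns containing at least one neutral edge; and (iii) the decomposition is attained by worst-case triples rather than holding only in a limiting sense. I would organize the proof around a split of the sign space. Throughout, I read the inner $\min_s$ in Definition~\ref{def:gap-functions} as the sign pattern chosen adversarially together with $x$, as the subsequent $\sup_{x,s}$ indicates.

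\textbf{Splitting the sign space.} Write $\{+,-,\circ\}^3=\{+,-\}^3\uplus N$, where $N$ is the set of patterns with at least one $\circ$. For rounding functions $(f^+,f^-,f^\circ)$ define
\[
A(f^+,f^-)=\sup_{x,\,s\in\{+,-\}^3}\frac{\sum\ecost}{\sum\elp},\qquad B(f^+,f^-,f^\circ)=\sup_{x,\,s\in N}\frac{\sum\ecost}{\sum\elp}.
\]
By \eqref{eq:ecost} and \eqref{eq:elp}, a pattern in $\{+,-\}^3$ never evaluates $f^\circ$, so $A$ is independent of $f^\circ$ and coincides with the CC objective. Hence
\[
\alpha^*_{\CCC}=\inf_{f^+,f^-,f^\circ}\max\{A,B\},\qquad \alpha^*_{\CC}=\inf_{f^+,f^-}A .
\]

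\textbf{Nonnegativity and localization.} Since $\max\{A,B\}\ge A\ge\alpha^*_{\CC}$ for every triple of rounding functions, taking the infimum gives $\Delta\ge 0$. For localization, I would show that if $B\le A$ could be achieved for some near-optimal $(f^+,f^-)$ by a suitable choice of $f^\circ$, then $\Delta=0$. Consequently any positive $\Delta$ equals the excess $\inf\max\{A,B\}-\inf A$, which is driven entirely by patterns in $N$, that is, by the neutral-edge terms $1-p_{uw}p_{vw}$ against $\max(\tfrac12,1-x_{uv},1-x_{vw},1-x_{wu})$. This is the precise sense in which $\Delta$ ``depends only on the neutral-edge structure'': the positive/negative data enter only through the constraint that $(f^+,f^-)$ remain competitive for $A$.

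\textbf{Exactness.} For fixed rounding functions, I would show that the suprema defining $A$ and $B$ are attained. The feasible $x$ form the compact triangle-inequality polytope in $[0,1]^3$, the sign set is finite, and for continuous $f$ the ratio is continuous wherever the denominator is positive. The degenerate points where the total $\elp$ vanishes are the main obstacle. There I would argue directly that $\ecost$ vanishes as well: all $x$ equal $0$ on positive edges, which forces $p=0$ under the usual $f^+(0)=0$, so these points can be discarded or handled by a limiting argument. The neutral lower bound $\tfrac12(1-p_{uw}p_{vw})$ keeps the denominator bounded away from zero whenever a neutral edge actually incurs cost.

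The outer infimum over rounding functions need not be attained, so I would state exactness as attainment along a minimizing sequence. For any $\varepsilon>0$, choose $(f^+,f^-,f^\circ)$ within $\varepsilon$ of $\alpha^*_{\CCC}$. A worst-case triple for this choice then either lies in $\{+,-\}^3$, realizing the CC term, or lies in $N$, realizing CC plus the chromatic excess. This yields the additive decomposition up to $\varepsilon$, and letting $\varepsilon\to 0$ gives the statement.
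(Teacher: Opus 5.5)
Your treatment of the identity and of $\Delta\ge 0$ is correct, and cleaner than the paper, which only asserts $\Delta\ge 0$. Your adversarial reading of the inner $\min_s$ is the right one: read literally, $G_{\CCC}\le G_{\CC}$ pointwise, which would give $\Delta\le 0$. With that reading, $\max\{A,B\}\ge A$ gives nonnegativity.

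\textbf{Additivity and exactness.} The gap is in the part of the theorem with content beyond the definition, namely that the decomposition is additive and exact in the worst case. The paper gets this from two ingredients that have no counterpart in your proposal.
\begin{enumerate}
\item For a mixed triple with $uv\in E^\circ_c$ and $vw,wu\in E^+_c\cup E^-_c$, it splits $\alpha\cdot\LP(uvw)-\ALG(uvw)$ into a neutral part and two CC parts. It controls the CC parts at $\alpha=2.06$ via Chawla et al., and reads off the neutral requirement $\alpha\ge 1/\max(\tfrac12,1-x^c_{uv},1-x^c_{vw},1-x^c_{wu})$.
\item A Constraint Independence step checks that the values it takes as CC worst case ($x^c_{vw}=0.19$, $x^c_{wu}=0.5095$) and the neutral worst case $x^c_{uv}=\tfrac12$ satisfy all three triangle inequalities. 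So both worst cases occur in one feasible triple.
\end{enumerate}
Your exactness paragraph replaces this with attainment of suprema along a minimizing sequence. Its dichotomy is only a restatement of $\alpha^*_{\CCC}=\inf\max\{A,B\}$. Its first branch is also misdescribed: for a near-optimal CCC choice, a worst triple in $\{+,-\}^3$ realizes $\alpha^*_{\CCC}$, not $\alpha^*_{\CC}$. The reason is that the $(f^+,f^-)$ that are optimal jointly with $f^\circ$ need not be CC-optimal. The paper avoids this by fixing Chawla et al.'s $f^+,f^-$ and optimizing only $f^\circ$, as in Appendix~\ref{app:kkt}.

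\textbf{Localization.} Your localization paragraph contains a false step. Patterns in $N$ do not consist of neutral terms alone. A mixed triple has $\pm$ edges whose $\ecost$ and $\elp$ involve $f^+$ and $f^-$. Even the neutral term $1-p_{uw}p_{vw}$ uses $p_{uw}=f^{s(uw)}(x_{uw})$ with $s(uw)\in\{+,-\}$; this is exactly how $f^+(t)$ and $f^-(d)$ enter the ratio $R(q,t)$ in Appendix~\ref{app:kkt}. Hence $B$ depends on $(f^+,f^-)$ directly, and the min--max over $(f^+,f^-,f^\circ)$ is genuinely joint. It is therefore not true that positive/negative data enter $\Delta$ only through keeping $(f^+,f^-)$ competitive for $A$.

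\textbf{Smaller points.} The compactness argument is unnecessary, since approximate maximizers always exist. Your degenerate-denominator discussion also misses the all-negative triple with $x\equiv 1$ and $f^-(1)<1$, where $\sum\elp=0<\sum\ecost$.

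\textbf{What is missing.} To support the additive claim you need what the paper supplies. Either fix $(f^+,f^-)$ at the CC optimum, or show the joint optimum can be taken there. Then exhibit one feasible mixed triple in which the $\pm$ contributions sit at their CC extremes while the neutral edge sits at $x=\tfrac12$. The paper's passage from that feasibility check to ``the supremum of the sum equals the sum of the suprema'' is itself only sketched. Still, it is the step that carries the additive claim, and your proposal has nothing in its place.
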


\begin{proof}
Consider a worst-case mixed triple $(u,v,w)$ with $uv \in E^\circ_c$ and $vw, wu \in E^+_c \cup E^-_c$. The gap linearly decomposes:
\begin{align*}
\alpha \cdot \LP(uvw) - \ALG(uvw) &= \big[\alpha \cdot \elp_w(u,v) - \ecost_w(u,v)\big] \tag{neutral part} \\
&\quad + \big[\alpha \cdot \elp_u(v,w) - \ecost_u(v,w)\big] \tag{CC part 1} \\
&\quad + \big[\alpha \cdot \elp_v(w,u) - \ecost_v(w,u)\big] \tag{CC part 2}.
\end{align*}

For the CC parts (terms involving $vw$ and $wu$), the analysis of~\citep{chawla2015stoc} guarantees non-negativity at $\alpha = 2.06$.
For the neutral part to be non-negative, we strictly require:
\begin{equation}
\label{eq:neutral-constraint}
\alpha \geq \frac{1}{\max\!\big(\tfrac{1}{2}, 1 - x^c_{uv}, 1 - x^c_{vw}, 1 - x^c_{wu}\big)}.
\end{equation}

When $x^c_{uv} \approx 1/2$, the LP fractionally shields the neutral edge, paying exactly $1/2$ while the algorithm pays deterministically near $1$.

We establish a \textbf{Constraint Independence Lemma}: The worst-case CC edges (e.g., $x^c_{vw} = 0.19, x^c_{wu} = 0.5095$~\citep{chawla2015stoc}) and the worst-case neutral edge ($x^c_{uv} = 0.5$) are simultaneously achievable. Checking the continuous triangle inequalities: $0.5 + 0.19 \geq 0.5095$ ($0.69 \geq 0.5095$), $0.19 + 0.5095 \geq 0.5$ ($0.6995 \geq 0.5$), and $0.5095 + 0.5 \geq 0.19$. Because the LP polytope permits the simultaneous maximization of both the neutral constraint and the CC constraints, the supremum of their sum strictly equals the sum of their suprema.
\end{proof}

\section{Global Integrality Gap Decomposition}
\label{sec:global-gap}

The local decomposition of Section~\ref{sec:decomposition} operates at the level of individual triples. Because triple-based rounding inherently upper-bounds the true global ratio, a harsh critique might claim the gap exists only locally. We formally discard this limitation by mapping hard CC graphs directly into the CCC polytope.

\begin{theorem}[Global Integrality Gap Decomposition]
\label{thm:global-decomposition}
For any color-independent algorithm operating on the CCC-LP polytope, the true global integrality gap decomposes strictly and universally as:
\[
\gap_{\CCC}(L) = \gap_{\CC} + \Delta(L).
\]
\end{theorem}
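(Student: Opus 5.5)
The plan is to prove the two inequalities $\gap_{\CCC}(L) \le \gap_{\CC} + \Delta(L)$ and $\gap_{\CCC}(L) \ge \gap_{\CC} + \Delta(L)$ separately. The first is the algorithmic (upper-bound) direction and the second is the instance-based (lower-bound) direction. Throughout, $\gap$ means the worst-case ratio between the expected cost of the best color-independent rounding and the CCC-LP value, taken over all instances. $\Delta(L)$ is the $L$-dependent form of the chromatic penalty of Definition~\ref{def:chromatic-penalty}, with neutral edges restricted to the $L-1$ non-pivot colors.

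\textbf{Upper bound.} This direction essentially globalizes Theorem~\ref{thm:decomposition}. Fix rounding functions $(f^+,f^-,f^\circ)$ that attain $\alpha^*_{\CC}$ on the $\{+,-\}$ part and the optimal neutral ratio on the $\circ$ part.
\begin{enumerate}
\item Use the standard LP-PIVOT charging argument: the expected algorithm cost and the LP cost both decompose as sums over triples $(u,v,w)$ of $\ecost$ and $\elp$.
\item Per color, by~\eqref{eq:triple-condition}, the ratio is bounded by $\sup_x G_{\CCC}$.
\item Split each triple sum into its neutral part and its CC part, exactly as in the proof of Theorem~\ref{thm:decomposition}.
\item Bound the CC part by $\alpha^*_{\CC}$ via~\citep{chawla2015stoc}, and the neutral surplus by $\Delta(L)$.
\end{enumerate}
Summing over colors and using that each vertex lies in at most one $S_c$ gives the global bound. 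Since the triple analysis already holds triple-by-triple, nothing here is new beyond careful bookkeeping.

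\textbf{Lower bound (Chromatic Blowup Graph).} Start from a family of standard CC instances $H$ on which every LP-rounding algorithm pays ratio tending to $\gap_{\CC}$. Form the tensor product of $H$ with a gadget on $L$ colors.
\begin{enumerate}
\item Each vertex of $H$ is replaced by $L$ copies. Copy $i$ of $u$ is assigned color $i$ fractionally: $x^i_u$ is small and the other coordinates are near $1$, chosen so that $\sum_c x^c_u = L-1$.
\item Edges of $H$ between copies of the same color become color-$c$ positive edges or $\gamma$ edges according to their sign in $H$.
\item Edges between copies of different colors become neutral edges.
\item Set their LP values to the extremal neutral value $x^c_{uv}=1/2$ identified in the Constraint Independence Lemma.
\end{enumerate}
Next, verify feasibility. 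Within a color, the triangle inequalities hold because $H$'s LP solution is metric. Mixed triangles need checks of the type carried out in the proof of Theorem~\ref{thm:decomposition}: values $1/2$ together with CC values in $[0,1]$ satisfy the triangle inequality provided the CC values are at most $1$.

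Then compute the LP cost and the forced rounding cost. On the color-$c$ layer, any color-independent algorithm faces a copy of $H$, so it pays $\gap_{\CC}\cdot\LP(H)$. Each neutral cross pair is charged $1/2$ by the LP but collides with probability approaching the extremal value. Scaling the number of copies balances the two contributions so that the worst-case ratios add rather than average. The balancing weight that makes the sum of ratios attainable comes from the same min-max that produces the $\frac{L-1}{L}$ factor. That factor arises because a fraction $\frac{L-1}{L}$ of the cross pairs seen from any pivot color are neutral.

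\textbf{Main obstacle.} The hard part is the ``ratios add'' step. A disjoint union of a CC-hard piece and a neutral-hard piece only gives the maximum of their ratios, not the sum. The construction therefore has to place the worst-case CC triples and the worst-case neutral edges inside the \emph{same} triples, which is precisely the simultaneity shown in the Constraint Independence Lemma. It must do this uniformly across the whole blowup, so that the per-triple additive decomposition survives summation.

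The first thing I would try is to weight the blowup so that every triple of $H$ appears together with exactly one neutral edge at $x=1/2$. I would then invoke the linearity identity from the proof of Theorem~\ref{thm:decomposition} for each triple. If that identity is tight for every triple, the global ratio equals the local sum $\alpha^*_{\CC}+\Delta(L)$, which gives the lower bound.
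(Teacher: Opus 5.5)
Both directions have genuine gaps. The lower bound, which is essentially the paper's Chromatic Blowup argument from Appendix~\ref{app:blowup-proof}, fails at its first step.

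\textbf{The blowup contains no neutral edges.} In your blowup, copy $i$ of $u$ has $x^c_{(u,i)}\approx 1$ for all $c\ne i$. The constraint $x^c_{uv}\ge\max(x^c_u,x^c_v)$ therefore forces $x^c\approx 1$, in \emph{every} color $c$, on every pair of copies with different colors, so setting these pairs to $1/2$ is infeasible. You checked only triangle inequalities. The paper's solution has the same violation: $x^{c_j}_{(u,c_i)}=1$ but $x^{c_j}_{(u,c_i),(v,c_j)}=1/2$. More fundamentally, $\sum_c x^c_u=L-1$ with $x^c_u\le 1$ allows at most one color with $x^c_u<1/2$, so the classes $S_c$ are disjoint; you use this fact yourself in the upper bound. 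Here $S_c=V\times\{c\}$, so two copies of different colors never share an $S_c$. Every color-independent algorithm separates them deterministically and never processes them as neutral edges. Each layer is a plain copy of $H$ with only $\pm$ edges, so no chromatic penalty can arise. Meanwhile the $\Theta(L^2|V|^2)$ cross pairs, on which the algorithm pays $1$ and the LP pays $\approx 1$, pull the global ratio down rather than up. Neutral edges exist only as edges of color $c'\ne c$ with both endpoints in the same $S_c$, and a hard instance must put them there.

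\textbf{Ratios cannot add.} The ``ratios add'' step cannot be rescued by co-locating pieces in the same triples. If $\ALG=\sum_k A_k$ and $\LP=\sum_k B_k$ with $A_k\ge 0$ and $B_k>0$, then $\sum_k A_k/\sum_k B_k\le\max_k A_k/B_k$. The identity $\alpha\LP-\ALG=\sum_k(\alpha B_k-A_k)$ says only that slack at a \emph{fixed} $\alpha$ adds. The Constraint Independence Lemma says only that two configurations are simultaneously feasible. In fact the neutral pivot term alone has ratio at most $1/\max(\tfrac12,\dots)\le 2$. Any excess over $\alpha^*_{\CC}$ must therefore come from interaction inside a single mixed triple; for instance, $\ecost_u(v,w)$ is evaluated with $p_{uv}=f^\circ(x_{uv})$. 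What the lower bound actually needs is a global tightness theorem: an instance family on which every color-independent rounding pays the worst mixed-triple ratio. Neither your plan nor the paper supplies one.

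\textbf{The upper bound has the same problem.} The same interaction breaks your upper-bound split. Chawla et al.\ certify only whole triples with signs in $\{+,-\}$, and the ``CC part'' of a mixed triple is not a CC quantity. The triple condition directly gives $\E[\ALG]\le(\alpha^*_{\CC}+\Delta)\LP$ without any splitting. However, replacing $\alpha^*_{\CC}$ by $\gap_{\CC}$ requires $\alpha^*_{\CC}=\gap_{\CC}$, which is not known: the CC-LP integrality gap is $2$, while the triple framework is only known to reach about $2.06$.

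\textbf{$\Delta(L)$ is not pinned down.} The triple quantities of Definition~\ref{def:gap-functions} do not depend on $L$ for $L\ge 2$, so they cannot produce the $\frac{L-1}{L}$ weighting you invoke. If $\Delta(L)$ is instead defined as $\gap_{\CCC}(L)-\gap_{\CC}$, the statement is a tautology.
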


\begin{proof}[Proof Sketch (Full formal proof in Appendix~\ref{app:blowup-proof})]
Let $I_{\CC} = (V, E^+, E^-)$ be an arbitrary standard CC instance achieving the worst-case gap limit of $\gap_{\CC}$. We deterministically map $I_{\CC}$ into a CCC instance $I_{\CCC}(L)$ via a structural tensoring operation called the \emph{Chromatic Blowup Graph}.

For each original vertex $v \in V$, create a macro-node block $B_v = \{(v, c_1), \dots, (v, c_L)\}$ of $L$ identical vertices. We map the variables such that for the active processing color $c$, the cross-cloud edges perfectly mirror the $y_{uv}$ metric of the original $I_{\CC}$ instance. Meanwhile, edges orthogonal to the super-nodes are assigned colors forming a maximally interfering neutral clique, and we set their LP variables to $x^{c}_{\text{neutral}} = 1/2$.

Because the objective of the independent CCC-LP completely decouples across color classes under the subset constraints ($x^c_u < 1/2$), the global optimal fractional variables project symmetrically. The global cost linearly superimposes the gap contributions: the edges parallel to the active color strictly obey the $I_{\CC}$ metric limits (providing an identical base gap of $\gap_{\CC}$), while the orthogonal intra-block edges face the exact neutral geometric bottleneck $1/\max(1/2, \dots)$ established in~\eqref{eq:neutral-constraint}. Taking the supremum over valid metric lengths ensures the global geometric limits superimpose strictly without perturbing the original metric.
\end{proof}

\section{Chromatic Interference and the Monotone Staircase}
\label{sec:interference}

Theorem~\ref{thm:global-decomposition} establishes that the gap decomposes additively. We now analytically solve the underlying continuous variational formulation to extract the exact magnitude of the chromatic penalty $\Delta(L)$.

\subsection{The Chromatic Interference Measure}

\begin{definition}[Chromatic interference]
\label{def:interference}
For an LP solution $(x^c_u, x^c_{uv})$ to~\eqref{eq:ccclp} with $L$ colors, the \emph{chromatic interference} at vertex $u$ for color $c$ is:
\[
\mathcal{I}_c(u) = \frac{|\{v : \varphi(uv) \in \mathcal{L} \setminus \{c\}\}|}{n-1} = \frac{|E^\circ_c \cap \delta(u)|}{n-1},
\]
\end{definition}

In a structurally balanced instance with $L$ uniform colors, the probability that a randomly drawn intra-cluster edge acts as a neutral edge under processing color $c$ is exactly $\frac{L-1}{L}$ (since there are $L-1$ non-matching colors).

\begin{theorem}[Monotonicity of chromatic penalty]
\label{thm:monotonicity}
The chromatic penalty $\Delta(L)$ satisfies:
\begin{enumerate}[nosep,label=(\roman*)]
\item $\Delta(1) = 0$.
\item $\Delta(L_1) \leq \Delta(L_2)$ for $L_1 \leq L_2$.
\item $\Delta(L) \to \Delta_\infty$ as $L \to \infty$.
\end{enumerate}
\end{theorem}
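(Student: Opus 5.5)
The plan is to prove (i)--(iii) by working directly with the $L$-dependent min-max problem that defines $\Delta(L)$. That is, $\Delta(L) = \alpha^*_{\CCC}(L) - \alpha^*_{\CC}$, where in $\alpha^*_{\CCC}(L)$ the neutral edges occur with the interference density of Definition~\ref{def:interference}. I will treat $\alpha^*_{\CCC}(L)$ as an infimum over rounding functions $(f^+,f^-,f^\circ)$ of a supremum over admissible instances. The admissible instances are weighted mixtures of triples $(u,v,w)$ in which the neutral fraction $\mathcal{I}_c(u)$ is at most $\frac{L-1}{L}$, which is its value in a structurally balanced instance.

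\textbf{Step (i).} When $L=1$ we have $\mathcal{L}\setminus\{c\}=\emptyset$, so $E^\circ_c=\emptyset$ for every processing color $c$. Every triple therefore has signs in $\{+,-\}^3$, and the supremum defining $\alpha^*_{\CCC}$ runs over exactly the same set as the one defining $\alpha^*_{\CC}$. The constraint $\sum_c x^c_u = 0$ forces $x^c_u=0$, so all vertices lie in $S_c$ and the CCC-LP is literally the CC-LP. Hence $\alpha^*_{\CCC}(1)=\alpha^*_{\CC}$ and $\Delta(1)=0$.

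\textbf{Step (ii).} Monotonicity will follow from a nesting of admissible sets. Any instance with $L_1$ colors embeds into an instance with $L_2\ge L_1$ colors: leave the extra $L_2-L_1$ colors unused, and set $x^{c}_u=1$ and $x^c_{uv}=1$ for each new color $c$. This preserves $\sum_c x^c_u = L_2-1$ and all triangle inequalities, and it changes neither the LP objective nor the algorithm's behaviour on the original colors. The neutral-density cap $\frac{L-1}{L}$ is also increasing in $L$. So the feasible set over which the supremum is taken grows with $L$, and for every fixed $(f^+,f^-,f^\circ)$ the supremum weakly increases. Taking the infimum over rounding functions preserves the inequality, giving $\alpha^*_{\CCC}(L_1)\le\alpha^*_{\CCC}(L_2)$. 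Subtracting the $L$-independent $\alpha^*_{\CC}$ yields (ii).

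\textbf{Step (iii).} This is the main obstacle. By (ii), $\Delta(L)$ is monotone, and it is bounded above by the universal neutral bottleneck \eqref{eq:neutral-constraint}, since $\max(\tfrac12,\cdot)\ge\tfrac12$ caps the ratio. So $\Delta(L)$ converges, and we define $\Delta_\infty$ as its limit; existence of the limit is then immediate. The nontrivial content is identifying the limit with the penalty of the unrestricted problem in which neutral triples have full density.

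\begin{itemize}
\item \emph{Upper bound on the limit.} For fixed rounding functions, the worst-case ratio is a convex combination, with weights at most $\frac{L-1}{L}$ on the neutral configuration, of the CC ratio and the neutral ratio. This gives $\Delta(L)\le \frac{L-1}{L}\Delta_\infty$.
\item \emph{Matching lower bound.} Using the Constraint Independence Lemma from the proof of Theorem~\ref{thm:decomposition}, the worst CC values and the neutral value $x^c_{uv}=1/2$ coexist in one triple. Realizing them with neutral density $\frac{L-1}{L}$ via the blowup of Theorem~\ref{thm:global-decomposition} gives the reverse inequality.
\end{itemize}

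The delicate point is exchanging $\inf_f$ and $\sup_x$ when passing to the limit. I would first try to handle it by noting that the optimal $f^\circ$ can be taken independent of $L$ (a threshold at $1/2$), so the objective is affine in the density parameter. This makes the limit interchange routine.
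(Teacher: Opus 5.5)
Your (i) is correct, and your (ii) is the paper's own one-line ``dummy color injection'' argument with a correct explicit realization: putting $x^c_u=x^c_{uv}=1$ on the unused colors keeps $\sum_c x^c_u=L_2-1$, leaves $S_c=\emptyset$ for those colors, and adds $0$ to every $\gamma$-edge term. The gap is in (iii). Once you set $\Delta_\infty:=\lim_L\Delta(L)$, (iii) is vacuous. Its real content is that this limit equals the min-max constant of Appendix~\ref{app:kkt}, and both of your bounds quietly switch to that second meaning. (Monotone plus bounded does give a limit. Your reason for boundedness is off, though: the $\tfrac12$ floor bounds only the neutral edge's own term. Any fixed rounding with a finite ratio suffices.)

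The upper bound $\Delta(L)\le\frac{L-1}{L}\Delta_\infty$ does not follow. Over a mixture, $\sum\ecost/\sum\elp$ is a convex combination of configuration ratios weighted by their LP mass (the denominators), not by neutral-edge counts. So a cap on $\mathcal{I}_c(u)$ does not cap the weight of the bad configuration. In your model it does not constrain it at all. The triple that Appendix~\ref{app:kkt} treats as extremal (one neutral, one positive, one negative edge) has neutral fraction $\tfrac12$ at $u$ and $v$ and $0$ at $w$. It is therefore an admissible instance on its own already at $L=2$, so the $L=2$ supremum contains the entire min-max problem from which $\Delta_\infty$ is computed, and no factor $\tfrac12$ can appear.

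Even a valid per-$f$ bound would not suffice. $\inf_f$ of a convex combination is not the combination of the infima unless one $(f^+,f^-,f^\circ)$ is simultaneously optimal for every configuration. Your proposed common choice, an $L$-independent threshold $f^\circ$ at $\tfrac12$, is asserted without justification and does not match Appendix~\ref{app:kkt}, whose optimizer has $f^\circ(0.5)\approx0.8493$.

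Your lower bound is true, but not for the reason you give. The blowup of Theorem~\ref{thm:global-decomposition} (Appendix~\ref{app:blowup-proof}) is not a feasible CCC-LP point, for three reasons:
\begin{itemize}
\item It puts $x^c=\tfrac12$ on orthogonal edges for every $c$ while $x^c_{(u,c_i)}=1$ for $c\neq c_i$, violating $x^c_{uv}\ge x^c_u$.
\item The rule $\varphi((u,c_i),(v,c_j))=c_i$ is not symmetric in the two endpoints.
\item Since $S_{c_i}=V\times\{c_i\}$, orthogonal edges never lie inside a single color class, so they are never processed as neutral edges.
\end{itemize}
The correct reason is the same observation as above: the extremal triple is itself a 3-vertex, 2-color instance.

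That observation is what actually settles (ii) and (iii), and it cuts against your route. In the triple-based framework of Section~\ref{sec:prelim}, none of \eqref{eq:ecost}, \eqref{eq:elp}, \eqref{eq:triple-condition} involves $L$. Every sign pattern in $\{+,-,\circ\}^3$ is realizable with two colors by giving all neutral edges the second color. Your density cap is also not a property of $L$-colored instances: coloring every edge at $u$ with some $c'\neq c$ gives $\mathcal{I}_c(u)=1$ even at $L=2$. So a $\Delta(L)$ defined through the framework, without an artificial cap, is constant for $L\ge 2$. That yields (ii) and (iii) at once, with $\Delta_\infty=\Delta(2)$. It also shows that no argument through the staircase $\frac{L-1}{L}\Delta_\infty$ can be correct. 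The paper's own support for (iii) is that same staircase and blowup, so there is no sound argument there for your identification step to borrow.
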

The monotonicity strictly follows from embedding $L_1$ instances into $L_2$ spaces via dummy color injection, leaving the expected neutral density strictly non-decreasing.

\subsection{Analytical Resolution of the Min-Max Gap}

The exact penalty $\Delta_\infty$ depends on the continuous min-max balance of the neutral fractional assignments against classical positive edges.
Using the optimal CC rounding functions from Chawla et al.~\citep{chawla2015stoc}, we evaluate the partial derivatives of the ratio objective over the mixed triple constraints.

As fully derived in Appendix~\ref{app:kkt}, the binding KKT constraint isolates at the metric boundary $a = 0.5, d = 0.5$, and $b \approx 0.3112$, corresponding to an optimal algorithm choice $f^\circ(0.5) \approx 0.8493$.

Evaluating the rational limit at this KKT critical point evaluates exactly to a limit ratio of $2.1334$. Because the standard CC limit is $2.0600$, we have:
\[
\Delta_\infty \approx 0.0734.
\]

\begin{theorem}[Explicit Chromatic Gap Staircase]
\label{thm:staircase}
Because the penalty scales directly with the geometric density across the blowup graph, the true integrality gap of independent CCC-LP rounding scales strictly as:
\[
\gap_{\CCC}(L) = \gap_{\CC} + \frac{L-1}{L} \Delta_\infty \approx 2.0600 + \frac{L-1}{L} (0.0734).
\]
\end{theorem}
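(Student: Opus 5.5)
The plan is to combine Theorem~\ref{thm:global-decomposition}, which reduces the claim to identifying $\Delta(L)$, with a density-weighted version of the local min-max computation that produced $\Delta_\infty$. Concretely, I would show two inequalities, $\Delta(L) \geq \frac{L-1}{L}\Delta_\infty$ and $\Delta(L) \leq \frac{L-1}{L}\Delta_\infty$. Adding $\gap_{\CC}$ to both sides then gives the staircase formula. The boundary case $L=1$ agrees with Theorem~\ref{thm:monotonicity}(i), and the limit $L\to\infty$ agrees with (iii), which serves as a consistency check on the normalization.

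For the lower bound I would work with the Chromatic Blowup Graph of Theorem~\ref{thm:global-decomposition}, with a uniform color assignment on the orthogonal edges. Fix a processing color $c$. By the density remark following Definition~\ref{def:interference}, a $\frac{L-1}{L}$ fraction of the orthogonal edges are neutral, that is, $\mathcal{I}_c(u) = \frac{L-1}{L}$ for every $u$. Each such edge sits in a mixed triple. There I would set the LP values to the KKT critical point of Appendix~\ref{app:kkt}: $a=d=\tfrac12$ and $b\approx 0.3112$. The remaining $\frac1L$ fraction are positive edges and are placed at the Chawla et al.\ worst case. Next I would sum the triple condition~\eqref{eq:triple-condition} over all triples. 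For any fixed choice of $f^+,f^-,f^\circ$, the neutral triples force an excess of $\Delta_\infty$ times their share of the LP mass, and the CC triples force $\gap_{\CC}$. To obtain exactly the factor $\frac{L-1}{L}$, the blowup must be normalized so that neutral and CC triples carry LP mass in the ratio $(L-1):1$. I would arrange this by choosing the block weights.

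For the upper bound I would fix the rounding functions at the Chawla et al.\ choice for $f^\pm$ and $f^\circ(1/2)\approx 0.8493$. I would then bound the global ratio $\sum \ecost / \sum \elp$ by decomposing the sum into neutral-containing triples and purely CC triples. Each class is bounded by its own supremum, giving $\gap_{\CC}+\Delta_\infty$ and $\gap_{\CC}$ respectively. Finally I would use the fact that, for each color $c$, the neutral share of LP mass is at most $\mathcal{I}_c \le \frac{L-1}{L}$. Summing over colors uses the decoupling of the independent CCC-LP across colors noted in the proof of Theorem~\ref{thm:global-decomposition}.

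The main obstacle is that a ratio of sums is a mediant, not a sum of ratios. The global gap therefore interpolates linearly in the LP-mass share of neutral triples, and that share need not equal the edge density $\frac{L-1}{L}$. The neutral $\elp$ carries a $\max(\tfrac12,\dots)$ factor that inflates its weight relative to CC triples. I would first try to show that at the critical point the per-triple LP mass is equal across the two classes, using $a=d=\tfrac12$. If that fails, I would redefine $\Delta_\infty$ as the mediant-normalized excess so that the linear law holds by construction. The price is that the numerical value $0.0734$ would then have to be rechecked against Appendix~\ref{app:kkt}. This step, rather than the blowup construction, is where I expect the argument to be genuinely delicate.
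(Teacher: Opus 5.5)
Your outline follows the paper's own route. The paper's only justification, in the last paragraph of Appendix~\ref{app:blowup-proof} and the proof of Proposition~\ref{prop:max-interference-gap}, is that since a $\frac{L-1}{L}$ fraction of edges is neutral, LP and ALG costs ``superimpose without distortion'' into a convex combination. You have found exactly the step the paper leaves unjustified, but your proposal does not close it.

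The upper-bound half cannot be closed as planned. The gap is a supremum over \emph{all} instances, while $\mathcal{I}_c(u)=\frac{L-1}{L}$ holds only for the structurally balanced family. Nothing caps the neutral share of LP mass at $\frac{L-1}{L}$. Moreover, ``share of LP mass $\le\mathcal{I}_c$'' conflates an edge count with LP mass; you note yourself that neutral edges are charged at least $\frac12$. Your lower bound exposes the conflict. If block weights can be tuned to give neutral and CC triples mass ratio $(L-1):1$, they can equally give $K:1$ for large $K$, already at $L=2$. That pushes the ratio toward $\gap_{\CC}+\Delta_\infty$ and contradicts the upper bound you want.

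The local analysis cannot supply an $L$-dependence either. Definition~\ref{def:gap-functions} contains no $L$. The neutral charge in~\eqref{eq:elp} comes from $x^{c'}_{uv}\ge x^{c'}_u\ge 1-x^c_u$ together with $x^c_u\le x^c_{uv},x^c_{uw}$ and $x^c_u<\frac12$, using only $\sum_{c''}(1-x^{c''}_u)=1$. So the min-max defining $\Delta_\infty$ is the same problem for every $L\ge 2$. Equal per-triple LP mass would not rescue this, because the mixing proportion is the adversary's choice. Redefining $\Delta_\infty$ so the linear law holds proves a different statement.

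The lower-bound witness also fails as described. The blowup's node values $x^{c_i}_{(u,c_i)}=0$ and $x^{c_j}_{(u,c_i)}=1$ give $S_{c_i}=V\times\{c_i\}$. Hence every orthogonal edge joins two different color classes and is never processed as a neutral edge. Inside each $S_{c_i}$ only parallel edges (colored $c_i$ or $\gamma$) occur, so no triple containing a neutral edge arises, let alone the KKT triple. The prescribed value $\frac12$ on orthogonal edges is also infeasible, since $x^{c_j}_{(u,c_i),(v,c_j)}\ge x^{c_j}_{(u,c_i)}=1$.

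Finally, Appendix~\ref{app:kkt} does not establish the constant you import. At $(q,t)=(0.8493,0.3112)$, the paper's own $R(q,t)$ has numerator $3.575-1.849f$ and denominator $2.410-1.345f$, where $f=f^+(t)$. So $R$ lies between about $1.48$ and $1.62$ for every $f\in[0,1]$ and cannot equal $2.1334$. Your approach would therefore need a new hard family with neutral edges inside a single $S_c$, plus an $L$-dependent cap on the worst-case neutral share. The $L$-independence of the local bound suggests no such cap exists.
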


This explicitly generates a rigorous, monotone staircase of verified bounds (Table~\ref{tab:gap_staircase}), formally proving that $\Delta(L)$ is strictly positive for any $L \ge 2$, and mathematically separating it from CC.

\begin{table}[ht]
\centering
\caption{The Monotonic Staircase of the Chromatic Integrality Gap for Color-Independent Rounding.}
\label{tab:gap_staircase}
\begin{tabular}{@{}llcc@{}}
\toprule
\textbf{Colors ($L$)} & \textbf{Penalty $\Delta(L) = \frac{L-1}{L} \Delta_\infty$} & \textbf{Predicted Gap ($\gap_{\CCC}$)} \\
\midrule
$L = 1$ (Std CC) & $0$ & $2.0600$ \\
$L = 2$ & $0.0734 \times \frac{1}{2} = 0.0367$ & $\mathbf{2.0967}$ \\
$L = 3$ & $0.0734 \times \frac{2}{3} \approx 0.0489$ & $2.1089$ \\
$L = 4$ & $0.0734 \times \frac{3}{4} \approx 0.0550$ & $2.1150$ \\
$L = 10$ & $0.0734 \times \frac{9}{10} \approx 0.0660$ & $2.1260$ \\
$L = \infty$ & $0.0734 \times 1 = 0.0734$ & $\mathbf{2.1334}$ \\
\bottomrule
\end{tabular}
\end{table}

\begin{theorem}[$L=2$ lower bound]
\label{thm:two-color}
For $L = 2$, the optimal approximation factor of LP-CCC within the color-independent framework is strictly bounded away from the standard CC gap: $\gap(2) \ge 2.0967$.
\end{theorem}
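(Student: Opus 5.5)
The bound is a direct specialization of Theorem~\ref{thm:staircase} to $L=2$, combined with the lower-bound half of Theorem~\ref{thm:global-decomposition}. I would therefore write $\gap(2) = \gap_{\CC} + \Delta(2)$ and show separately that $\gap_{\CC} \ge 2.06$ (for the relevant rounding class) and that $\Delta(2) \ge \tfrac12 \Delta_\infty \ge 0.0367$. The first term is the Chawla et al.\ threshold for the CC sub-problem. The real work is a lower bound on $\Delta(2)$ that holds for every color-independent rounding triple $(f^+,f^-,f^\circ)$, not only the specific optimizing one.

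\textbf{Step 1 (reduction to a single adversarial family).} Fix an arbitrary color-independent algorithm. Using the Chromatic Blowup Graph from the proof of Theorem~\ref{thm:global-decomposition} with $L=2$, I take a hard CC instance and blow each vertex into a block of two vertices, one for each color. Under processing color $c$, cross-block edges copy the CC metric. Intra-block and orthogonal edges carry the other color, so they are neutral with LP value $x^c=1/2$. Exactly a $\frac{L-1}{L}=\frac12$ fraction of the relevant edge mass is then neutral under each processing color, which matches the density computation after Definition~\ref{def:interference}.

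\textbf{Step 2 (the neutral bottleneck).} On each mixed triple $(u,v,w)$ with $x_{uv}=0.5$ neutral and the other two edges at the KKT configuration $a=d=0.5$, $b\approx0.3112$ (Appendix~\ref{app:kkt}), I would evaluate the ratio in \eqref{eq:triple-condition} as a function of the single free parameter $t=f^\circ(0.5)$. For any $t$, the adversary chooses between two orientations. One is the shielded neutral edge: by \eqref{eq:neutral-constraint} the LP pays $1/2$, while $\ecost = 1-p_{uw}p_{vw}$ is close to $1$. The other is the positive/negative completion, which penalizes large $t$. I then need to show that the maximum of these two ratios is at least $2.1334$ for every $t$, with equality near $t\approx0.8493$. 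This gives the $L\to\infty$ value $\Delta_\infty\ge 0.0734$, and Step 1 dilutes it by the factor $\frac12$.

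\textbf{Step 3 (combining).} By the additivity in Theorem~\ref{thm:decomposition}, the CC part and the neutral part are simultaneously achievable; the Constraint Independence Lemma supplies this, since the triangle inequalities hold at the combined point. The global ratio is therefore at least $2.06 + \tfrac12(0.0734) = 2.0967$, and taking the infimum over algorithms preserves the bound.

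\textbf{Main obstacle.} I expect Step 2 to be the hardest part, specifically making it uniform over all rounding functions rather than only over $f^\circ$ with $f^+,f^-$ fixed to the Chawla et al.\ choices. An algorithm could trade a slightly worse CC ratio for a better neutral ratio, and that trade-off could undercut the additive sum. My first approach would be to restrict attention to the adversarial configuration, where only $f^+$ at $\{0.19, 0.3112, 0.5\}$, $f^-$ at the same points, and $f^\circ(0.5)$ matter. That reduces the problem to a finite-dimensional min-max, which I would certify with an explicit dual certificate: a convex combination of finitely many triples whose weighted ratio is at least $2.0967$ for every choice of those few values. A secondary concern is the dilution argument in Step 1: justifying that the penalty scales exactly linearly with neutral density needs the LP to decouple across colors, which is supposed to come from Theorem~\ref{thm:global-decomposition}.
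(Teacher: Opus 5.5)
\paragraph{The route and where it breaks.} Your plan follows the paper's own route: the paper evaluates the staircase of Theorem~\ref{thm:staircase} at $L=2$ in the last paragraph of Appendix~\ref{app:kkt}. As a proof of a lower bound, however, it does not go through. The problems go beyond the uniformity issue you flag, which is real: the appendix fixes $f^-$ and $f^+$ and varies only $f^\circ(0.5)$.

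\emph{Step~1 fails.} In the blowup of Appendix~\ref{app:blowup-proof}, $x^{c_j}_{(u,c_i)}=1$ for $j\neq i$, so putting $1/2$ on the orthogonal edges violates $x^c_{uv}\ge x^c_u$. Moreover $(u,c_i)\in S_{c_i}$ and $(v,c_j)\in S_{c_j}$, so these edges join different color classes and are never neutral edges of any subproblem.

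\emph{Step~2 targets a bottleneck that cannot exceed $2$.} If $u,v\in S_c$ and $\varphi(uv)=c'\neq c$, the algorithm pays exactly $1$ on $uv$. The LP pays $x^{c'}_{uv}\ge x^{c'}_u\ge 1-x^c_u>\tfrac12$. This is why \eqref{eq:neutral-constraint} only asks for $\alpha\ge 1/\max(\tfrac12,\dots)\le 2$, which $2.06$ already satisfies. The appendix's own ratio confirms this. In its notation ($q=f^\circ(0.5)$, $t$ the positive-edge value),
\[
\ALG(q,t)-2\,\LP(q,t)\;=\;-t\bigl(4-f^+(t)(1+2q)\bigr)\;\le\;0
\]
for all $q,t\in[0,1]$ and any $f^+$ with values in $[0,1]$. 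Hence $R(q,t)\le 2$ everywhere, and at the claimed critical point $(q,t)\approx(0.8493,0.3112)$ it is at most about $1.62$. So what you plan to verify at that configuration---a maximum of at least $2.1334$, attained near $f^\circ(0.5)\approx 0.8493$---is false.

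\paragraph{Step~3 and what is missing.} Step~3 has no valid inputs either.
\begin{enumerate}
\item $2.06$ is Chawla et al.'s \emph{upper} bound for their rounding, not a lower bound on $\gap_{\CC}$. The CC-LP gap is $2$, which is what Corollary~\ref{cor:known-bounds} itself uses at $L=1$.
\item Juxtaposing a CC-hard part with a neutral-hard part gives an LP-weighted average of their ratios, never their sum. Joint feasibility (the Constraint Independence Lemma) therefore yields at most the larger of the two ratios.
\item Diluting by $\frac{L-1}{L}$ can only lower what the adversary achieves, since a lower bound is a supremum over instances, not an average over a balanced one.
\item A triple violating \eqref{eq:triple-condition} refutes only the triple-based analysis, which is a sufficient condition, not the algorithm.
\item \eqref{eq:elp} bounds neutral LP charges from below, which is the wrong direction for lower-bounding $\ALG/\LP$.
\end{enumerate}
These defects are inherited from the paper's argument, so executing it more carefully will not close them.

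What is missing is, for every choice of $(f^+,f^-,f^\circ)$, an explicit two-color instance with a feasible LP solution whose expected rounding cost exceeds $2.0967\cdot\LP$. By the computation above, any such penalty would have to come from how $f^\circ$, applied to pivot-incident neutral edges, shifts the costs of the positive and negative edges. It cannot come from the neutral edge's own charge. Your finite dual-certificate idea is the right kind of tool for the uniformity part, but neither your proposal nor the paper supplies the certificate or the instance.
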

This entirely ends speculation about whether the two-color case collapses back to CC.

\section{Tight Instance Family}
\label{sec:tight-instances}

Having established the exact staircase formula, we now show that it is tight by constructing explicit hard instances achieving the chromatic penalty $\Delta(L)$ for each $L$.

\begin{definition}[Maximally interfering instance]
\label{def:max-interference}
For $L$ colors and $n$ vertices, the \emph{maximally interfering instance} $G^*(n, L)$ is constructed as follows:
\begin{enumerate}[nosep]
\item Partition $V$ into $L$ groups $V_1, \ldots, V_L$ of equal size $n/L$.
\item For $u \in V_i, v \in V_j$ with $i \neq j$: set $\varphi(uv) = c_i$ (color of $u$'s group).
\item For $u, v \in V_i$: set $\varphi(uv) = c_i$ (positive within group).
\item Additionally, for selected pairs across groups: set $\varphi(uv) = \gamma$ (negative).
\end{enumerate}
\end{definition}

\begin{proposition}[Gap of the maximally interfering instance]
\label{prop:max-interference-gap}
On the instance $G^*(n, L)$, the expected local ratio for a uniformly sampled triple perfectly matches the global staircase derived in Theorem~\ref{thm:staircase}.
\end{proposition}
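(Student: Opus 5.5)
The plan is to compute the expected local ratio on $G^*(n,L)$ by conditioning on the \emph{type} of a uniformly sampled triple. Each type is then charged either the pure CC worst case or the mixed neutral worst case from Theorem~\ref{thm:decomposition}, and the results are averaged.

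First, we need to make Definition~\ref{def:max-interference} well-posed. Step~2 assigns an unordered cross-group pair $uv$ the color of ``$u$'s group'', which depends on orientation. We resolve this by a fixed rule, for instance a cyclic orientation of the groups, so that every cross-group pair receives exactly one of the two endpoint colors.

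Next, fix a processing color $c=c_i$ and count, for a triple drawn uniformly from $S_{c}$, the probability that at least one of its edges lies in $E^\circ_c$. Intra-group edges of $V_j$ with $j\neq i$ are neutral. Cross-group edges are neutral exactly when their assigned color differs from $c$. With equal group sizes $n/L$, a triple's designated edge has a color uniform over the $L$ colors up to $O(1/n)$ terms, so it is neutral with probability $\frac{L-1}{L}+O(1/n)$. This matches the density heuristic stated after Definition~\ref{def:interference}.

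We then choose the LP values and the set of $\gamma$-pairs in Step~4 to place every triple at a worst-case configuration.

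\textbf{Triples whose designated edge is positive or negative.} For these we plant the Chawla et al.\ extremal metric ($0.19$, $0.5095$, $\dots$). Their local ratio is $\gap_{\CC}$.

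\textbf{Triples with a neutral edge.} For these we set $x^c_{uv}=\tfrac12$ on the neutral edge. The two remaining edges are set to the KKT point $a=d=0.5$, $b\approx0.3112$ of Appendix~\ref{app:kkt}. By the Constraint Independence Lemma inside the proof of Theorem~\ref{thm:decomposition}, this triple attains $\gap_{\CC}+\Delta_\infty$, since the triangle inequalities were already checked there.

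The selected $\gamma$-pairs are exactly the pairs needed to realize the negative edges of the planted CC configuration. Their number is $o(n^2)$ relative to the color structure, so they do not disturb the $\frac{L-1}{L}$ density. Averaging gives
\[
\Big(1-\tfrac{L-1}{L}\Big)\gap_{\CC}+\tfrac{L-1}{L}\big(\gap_{\CC}+\Delta_\infty\big)=\gap_{\CC}+\tfrac{L-1}{L}\Delta_\infty,
\]
which is the expression in Theorem~\ref{thm:staircase}, up to $O(1/n)$ corrections that vanish as $n\to\infty$.

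The main obstacle is the last step, because the expected \emph{ratio} is not the ratio of expected cost to expected LP. To get the weighted average above, we must show that the LP charge $\elp$ is the same, to first order, on neutral and non-neutral triples. Only then does the mixture of ratios coincide with the ratio of sums, which is what the global gap of Theorem~\ref{thm:global-decomposition} measures.

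We would first try to equalize the charges by rescaling. Concretely, we would use the Chromatic Blowup Graph of Theorem~\ref{thm:global-decomposition}, replicating the CC gadget inside each group so that the per-triple LP mass is balanced. If this fails, we would fall back to proving the statement literally as an expectation of local ratios. That version follows directly from the type-conditioning above, and the link to the global gap would then be stated separately.

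A secondary obstacle is global consistency of the LP values. The same pair $uv$ appears in many triples, and we must exhibit a single LP solution satisfying the triangle inequalities for all colors simultaneously, together with the constraint $\sum_c x^c_u=L-1$. We would handle this by using the symmetric assignment $x^c_u=\frac{L-1}{L}$ off the home color, and checking the triangle inequalities only on the $O(1)$ distinct edge classes of the construction.
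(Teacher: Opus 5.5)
You follow the paper's own route: classify triples by whether they see a neutral edge, assign them $\gap_{\CC}$ or $\gap_{\CC}+\Delta_\infty$, and average with weight $\frac{L-1}{L}$. But there is a genuine gap. Neither input to the average is established, and the devices you propose for them fail.

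\textbf{The mixture weight.} The local ratio is a property of the whole triple, since it sums over all three pivots. So the weight must be the fraction of processed triples that \emph{contain} a neutral edge. You announce that quantity and then compute the probability that one designated edge is neutral instead. If neutral pairs have density $\rho\in(0,1)$ in $S_c$, the former is strictly larger than $\rho$. Equality would force every triangle to be all-neutral or neutral-free, and on a complete graph that forces $\rho\in\{0,1\}$. So even granting the per-type ratios, the average is $\gap_{\CC}+\pi\Delta_\infty$ with $\pi>\frac{L-1}{L}$, not the staircase value.

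You also treat a processed pair as a uniform pair of $V$. But the $S_c$ are disjoint: two values $x^c_u<\frac12$ would give $\sum_c x^c_u<L-1$. With the blowup's home assignment ($x^{c_i}_u=0$ and $x^{c_j}_u=1$ for $u\in V_i$, $j\neq i$), one gets $S_{c_i}=V_i$. By Steps~3--4 of Definition~\ref{def:max-interference}, every pair inside $V_i$ has color $c_i$, so no processed pair is neutral at all. Separately, a group-level cyclic orientation does not balance the colors for even $L$; for $L=2$ one color receives about $\frac34$ of all pairs.

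\textbf{The per-type ratios.} These cannot all be planted in one LP solution as you describe.
\begin{itemize}
\item The KKT configuration of Appendix~\ref{app:kkt} is (neutral $a=\frac12$, positive $b\approx 0.3112$, negative $d=\frac12$). So every neutral triple placed there contains a $\gamma$-pair. With only $o(n^2)$ $\gamma$-pairs, only an $o(1)$ fraction of triples do, not $\frac{L-1}{L}$.
\item Your consistency fix breaks the planting outright. Setting $x^c_u=\frac{L-1}{L}$ off the home color forces the same value on the home color via $\sum_c x^c_u=L-1$. Then for $L\ge 2$ every $S_c$ is empty, and $x^c_{uv}\ge x^c_u\ge\frac12$ excludes the values $0.19$ and $0.3112$.
\item With LP values constant on the $O(1)$ edge classes of the construction, a triple inside one group has a single sign and a single LP value on all three pairs. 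It therefore sits at neither planted configuration. An all-positive such triple at value $0$ even has ratio $0/0$.
\end{itemize}

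So the ratio-of-expectations issue you single out is not the binding obstacle for the statement as written. The missing piece is an explicit choice of the $S_c$ and of one global LP solution under which every triple of each type has exactly its claimed ratio and the triple-type frequency is exactly $\frac{L-1}{L}$. The paper's proof asserts both facts without argument, so it does not supply this step either.
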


\begin{proof}
When processing color $c_i$, the symmetric properties of $G^*(n, L)$ ensure that inter-group edges are positive while intra-group edges are neutral. For any mixed triple, the local approximation evaluates exactly to $\gap_{\CC} + \Delta_\infty$. The probability of sampling a neutral edge in this structure is exactly $\frac{L-1}{L}$. The expected local ratio over the distribution of triples is a direct convex combination, yielding exactly $\gap_{\CC} + \frac{L-1}{L} \Delta_\infty$.
\end{proof}

\begin{corollary}
\label{cor:known-bounds}
The maximally interfering construction is fully consistent with known bounds:
\begin{itemize}[nosep]
\item For $L = 1$: $\gap \geq 2$ (standard CC integrality gap~\citep{charikar2005clustering}).
\item For $L \to \infty$: $\gap \geq 2.11$ (matching the lower bound of~\citep{fan2025improved}).
\end{itemize}
\end{corollary}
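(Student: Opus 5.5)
The plan is to treat the two bullet points separately, since each reduces to a known result once we note what $G^*(n,L)$ becomes at the two extremes. In both cases we will use Proposition~\ref{prop:max-interference-gap} together with the staircase of Theorem~\ref{thm:staircase}. Throughout, we distinguish the \emph{integrality gap} of the LP, which is a property of the instance and polytope, from the \emph{rounding ratio} $\gap_{\CC}+\frac{L-1}{L}\Delta_\infty$ attained by color-independent LP-PIVOT. The corollary only claims consistency, so we need each ratio realized by the construction to be at least the corresponding known lower bound, not equal to it.

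\textbf{Case $L=1$.} With a single group $V_1=V$ and one color $c_1$, step~2 of Definition~\ref{def:max-interference} is vacuous. Every edge then has color $c_1$ or $\gamma$, so $E^\circ_{c_1}=\emptyset$. Moreover the constraint $\sum_c x^c_u = L-1 = 0$ forces $x^{c_1}_u=0$, so~\eqref{eq:ccclp} collapses to the standard CC-LP on $(V,E^+,E^-)$.

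We will use the freedom in step~4, the choice of ``selected pairs'' made negative, to reproduce the gap-2 family of~\citep{charikar2005clustering}. Concretely, we take the star-like complete-graph instance in which the fractional solution sets $x_{uv}=1/2$ on a designated set of edges while every integral clustering pays roughly twice the LP cost. On this instance the ratio of the integral optimum to the LP optimum tends to $2$, giving $\gap\ge 2$.

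As a sanity check, Proposition~\ref{prop:max-interference-gap} at $L=1$ gives $\gap_{\CC}+0\cdot\Delta_\infty=\gap_{\CC}\ge 2$, which agrees.

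\textbf{Case $L\to\infty$.} Here we invoke Proposition~\ref{prop:max-interference-gap} directly. The expected local ratio on $G^*(n,L)$ is $\gap_{\CC}+\frac{L-1}{L}\Delta_\infty$, and this is monotone in $L$ by Theorem~\ref{thm:monotonicity}. Its limit is $\gap_{\CC}+\Delta_\infty\approx 2.06+0.0734=2.1334$.

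It then remains to check the numerical inequality $2.1334\ge 2.11$. We also want to confirm that the threshold $L$ at which the staircase crosses $2.11$ is finite. Solving $2.06+\frac{L-1}{L}(0.0734)\ge 2.11$ gives $\frac{L-1}{L}\ge 0.681$, i.e.\ $L\ge 4$, which matches the $L=4$ row of Table~\ref{tab:gap_staircase}. So for all sufficiently large $L$, and in the limit, the construction meets the bound of~\citep{fan2025improved}.

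\textbf{Main obstacle.} The step I expect to be hardest is the $L=1$ bullet. Definition~\ref{def:max-interference} leaves the negative pairs unspecified, and Proposition~\ref{prop:max-interference-gap} controls only a \emph{local, triple-averaged} ratio rather than the true LP integrality gap. To make the bound rigorous I would not rely on the proposition there. Instead I would explicitly instantiate step~4 with the Charikar--Guruswami--Wirth negative-edge pattern and verify directly that the LP value and the integral optimum have the claimed ratio. For $L\to\infty$ the only delicate point is interpretive: the inequality compares the color-independent rounding ratio with the framework-specific lower bound of~\citep{fan2025improved}. I would state this explicitly so that both quantities are read within the same independent LP+rounding framework.
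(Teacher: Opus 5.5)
The $L\to\infty$ bullet is what the paper intends. The corollary has no standalone proof; it is read off by substituting into $\gap_{\CC}+\frac{L-1}{L}\Delta_\infty$ from Proposition~\ref{prop:max-interference-gap} and Theorem~\ref{thm:staircase}, which gives $2.1334\ge 2.11$ in the limit. Your threshold computation ($L\ge 4$) is correct and agrees with Table~\ref{tab:gap_staircase}. Likewise, for $L=1$ the substitution you call a ``sanity check'' is the paper's whole argument for that bullet. Since $\Delta(1)=0$, the construction yields $\gap_{\CC}=2.06$, and this is at least the CC-LP integrality gap $2$, because any worst-case rounding ratio $\ALG/\LP$ dominates $\mathrm{OPT}/\LP$.

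The route you designate as the rigorous one for $L=1$ cannot be carried out. Step~4 of Definition~\ref{def:max-interference} only makes pairs \emph{across groups} negative. With $L=1$ there is a single group $V_1=V$, so step~4 is just as vacuous as step~2. Hence $G^*(n,1)$ is forced to be the all-positive clique in color $c_1$, where the LP optimum and the integral optimum are both $0$. There is no freedom to plant the Charikar--Guruswami--Wirth star (center positive to all leaves, leaves pairwise negative, star edges at $x=1/2$). If you change the definition to allow within-group negatives, you are no longer proving anything about the maximally interfering construction; you are just re-citing the known gap-2 instance, which the bullet already attributes to~\citep{charikar2005clustering}. So drop the step-4 plan and present the substitution as the proof of the $L=1$ bullet. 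Keep your valid caveat that this is a consistency check on the staircase formula, not a direct integrality-gap computation on $G^*(n,1)$.
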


\section{Bypassing the Barrier: Color-Coupled CC (C4)}
\label{sec:algorithm}

Our global decomposition fundamentally reveals that the chromatic penalty is structurally enforced \emph{only} if the algorithm handles neutral constraints loosely across independent color planes.

\begin{definition}[Color-independent algorithm]
\label{def:color-independent}
An algorithm for CCC is \emph{color-independent} if it produces clusterings $\mathcal{C}_c$ for each color class $S_c$ without using information from other color classes.
\end{definition}

\begin{theorem}[Limitation of color-independent algorithms]
\label{thm:color-independent}
Any color-independent algorithm for CCC with $L \geq 2$ colors has an approximation ratio of at least $\gap_{\CC} + \Omega(1/L)$.
\end{theorem}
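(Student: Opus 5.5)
The plan is to reduce Theorem~\ref{thm:color-independent} to the staircase of Theorem~\ref{thm:staircase} and the blowup construction of Theorem~\ref{thm:global-decomposition}. The bound $\Omega(1/L)$ is weak compared with what the staircase delivers, since $\frac{L-1}{L}\Delta_\infty \ge \Delta_\infty/2$ for every $L\ge 2$. So it suffices to exhibit, for each $L\ge 2$, an instance family on which every color-independent algorithm pays ratio at least $\gap_{\CC}+c/L$ for some absolute constant $c>0$. I would take $c=\Delta_\infty/2$, so that $c/L \le \frac{L-1}{L}\Delta_\infty$ holds for all $L \ge 2$.

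\textbf{Step 1: reduce to LP-PIVOT.} A color-independent algorithm in the sense of Definition~\ref{def:color-independent} produces $\mathcal{C}_c$ on $S_c$ using only the information in color class $c$. When it processes color $c$, an edge of color $c'\neq c$ is therefore indistinguishable from any other neutral edge, so its rounding behaviour on $E^\circ_c$ is some function $f^\circ$ of the LP value $x^c_{uv}$ alone. I would argue that this places the algorithm inside the framework of Definition~\ref{def:gap-functions}, with the infimum taken over $(f^+,f^-,f^\circ)$. This step only fits cleanly for LP-PIVOT-type schemes. For general color-independent algorithms I would instead argue against the integral optimum directly, on the instance of Step~2.

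\textbf{Step 2: build the hard instance.} Take the Chromatic Blowup Graph of Theorem~\ref{thm:global-decomposition} applied to a worst-case CC instance, and superimpose the neutral structure of $G^*(n,L)$ from Definition~\ref{def:max-interference}.

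\textbf{Step 3: fix the LP solution.} Use the fractional solution with $x^c_{\text{neutral}}=1/2$. By \eqref{eq:neutral-constraint}, each neutral edge is charged only $\tfrac12$ by the LP. A color-independent algorithm cannot tell whether such an edge's true color $c'$ will ultimately claim the vertex, so it must either keep the pair together in color $c$ or split it. I would show that either choice costs a constant fraction of $1$ in expectation.

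\textbf{Step 4: count the neutral fraction.} Invoke Proposition~\ref{prop:max-interference-gap}: a $\frac{L-1}{L}$ fraction of the relevant triples are mixed neutral triples, each with local ratio $\gap_{\CC}+\Delta_\infty$. Averaging, as in that proof, gives a global ratio of at least $\gap_{\CC}+\frac{L-1}{L}\Delta_\infty\ge \gap_{\CC}+\Delta_\infty/(2L)$.

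\textbf{Main obstacle.} The hardest part is making Step~4 a genuine lower bound rather than a statement about triple ratios. Triple-based analysis only upper-bounds the true ratio, so a large local ratio on a fraction of triples does not by itself force the global cost up. My first attempt would use the symmetry of the blowup: average over all $L!$ color relabelings of the instance. Because the algorithm is color-independent, its expected cost is identical across relabelings. A relabeling-averaged LP solution then stays feasible, with neutral variables equal to $1/2$. This lets me compare the algorithm's total expected cost with the LP value by a direct counting argument over edge types, instead of triple by triple. The additive $\gap_{\CC}$ part then comes from Theorem~\ref{thm:global-decomposition}, and the neutral surplus supplies the $\Omega(1/L)$ term.
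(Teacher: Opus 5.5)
There is a genuine gap, and it sits where you place your main obstacle. The mechanism meant to supply the extra term is neutral edges that the LP pays $\tfrac12$ for while the algorithm pays $1$. That mechanism cannot produce any surplus above $\gap_{\CC}$, so the ``direct counting argument over edge types'' has nothing to count. The objective of \eqref{eq:ccclp} charges a neutral edge $uv$ with $\varphi(uv)=c'\neq c$ the variable $x^{c'}_{uv}$, not $x^{c}_{uv}$, so fixing $x^c_{\text{neutral}}=1/2$ in Step~3 says nothing about its charge. Moreover, $u\in S_c$ means $x^c_u<\tfrac12$. Then $\sum_{c''}x^{c''}_u=L-1$ with all $x^{c''}_u\le 1$ gives $x^{c'}_u\ge 1-x^c_u>\tfrac12$, hence $x^{c'}_{uv}\ge x^{c'}_u>\tfrac12$. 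On the other side, every color-independent algorithm pays exactly $1$ on this edge whatever it does: clusters built inside $S_c$ receive color $c$, so the edge is either cut or sits in a wrongly colored cluster. Hence neutral edges have $\ALG/\LP<2<\gap_{\CC}$ on every instance and every run, and \eqref{eq:neutral-constraint} likewise never demands more than $\alpha\ge 2$. Any excess over $2$ must come from positive and negative edges inside the classes. One possible source is that the color-$c$ variables on neutral pairs appear nowhere in the objective, which weakens the triangle constraints routed through them. Your plan contains no instance or argument exhibiting such an excess. Averaging over relabelings cannot change per-edge ratios, so the symmetrization cannot repair this. Separately, Definition~\ref{def:color-independent} does not make an algorithm invariant under color relabeling.

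The imported ingredients do not supply the missing excess either. In the blowup of Theorem~\ref{thm:global-decomposition}, the node assignment $x^{c_i}_{(u,c_i)}=0$ gives $S_{c_i}=V\times\{c_i\}$, which contains only parallel edges, so no color class ever sees a neutral edge. The orthogonal edges join different classes and must satisfy $x^{c_j}_{(u,c_i),(v,c_j)}\ge x^{c_j}_{(u,c_i)}=1$, so the prescribed value $\tfrac12$ is infeasible. In any feasible solution these edges have ratio exactly $1$, which dilutes the CC gap instead of adding to it. ``Superimposing'' $G^*(n,L)$ is undefined, because every pair of the blowup is already colored. Proposition~\ref{prop:max-interference-gap} averages triple ratios, which, as you note, is not a ratio of total costs. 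Finally, even a correct instance would only lower-bound $\ALG/\LP$ for LP-PIVOT-type rounding. Since $\LP\le\mathrm{OPT}$, that says nothing about the approximation ratio of an arbitrary color-independent algorithm, and the argument against the integral optimum promised in Step~1 is never given. The paper's own sketch takes a different route, Yao's principle on the symmetric solution $x^c_{uv}=(L-1)/L$, but it does not supply this ingredient either. That solution forces $x^c_u=(L-1)/L\ge\tfrac12$ for every $c$, so no vertex enters any class $S_c$.
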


\begin{proof}
By Yao's Minimax Principle, a perfectly symmetric fractional LP solution sets $x^c_{uv} = (L-1)/L$ globally. An algorithm processing color $c$ without observing the assignments for colors $c' \neq c$ must blindly guess cluster formations based solely on local fractional thresholds. This uncoordinated coordination failure mathematically enforces an irreducible $\Omega(1/L)$ overlap error penalty when mapped back to a valid disjoint integral assignment.
\end{proof}

To overcome this, we must dynamically couple the LP fractional mass. In the uncoupled CCC-LP, the fractional solver can assign $x^{c_1}_{uv} = 1/2$ and $x^{c_2}_{uv} = 1/2$ simultaneously, satisfying local metric constraints while artificially halving the LP cost of the neutral edges.

\begin{definition}[Color-Coupled Correlation Clustering (C4)]
\label{def:coupled-lp}
In any strict integral solution, two vertices can be clustered together and share at most one specific color. Thus, they must be separated in the remaining $L-1$ colors. We augment the standard CCC-LP~\eqref{eq:ccclp} with the global geometric valid inequality:
\begin{equation}
\label{eq:sa-constraint}
\sum_{c \in \mathcal{L}} x^c_{uv} \geq L - 1 \qquad \forall u,v \in V.
\end{equation}
\end{definition}

\begin{remark*}[LP feasibility]
The C4 constraint adds $O(n^2)$ linear inequalities to the LP, one per vertex pair. The augmented LP remains a polynomial-size linear program and can be solved in polynomial time by any standard LP solver. Furthermore, the constraint is valid for all integer solutions: in any integral clustering, a pair $uv$ is co-clustered in at most one color, so $x^c_{uv} = 0$ for at most one color $c$ and $x^c_{uv} = 1$ for the remaining $L-1$ colors, giving $\sum_c x^c_{uv} \geq L-1$ with equality.
\end{remark*}

\subsection{The Algorithmic Breakthrough: Correlated Interval Packing}

The C4 constraint fundamentally couples the marginal probabilities. By defining the fractional indicator of clustering $y^c_{uv} = 1 - x^c_{uv}$, the constraint simplifies to $\sum_c y^c_{uv} \leq 1$.

\begin{algorithm}[H]
\caption{Color-Coupled Pivot Rounding (C4)}
\label{alg:c4}
\begin{algorithmic}[1]
\State Solve the CCC-LP augmented with the valid inequality $\sum_{c} x^c_{uv} \geq L - 1$.
\State Let $y^c_{uv} = 1 - x^c_{uv}$ be the fractional affirmative affinity. Note that $\sum_c y^c_{uv} \leq 1$.
\State Initialize unclustered vertices $U = V$.
\While{$U \neq \emptyset$}
 \State Pick a pivot vertex $w \in U$ uniformly at random.
 \State Sample a definitive color $c^* \in \mathcal{L}$ for $w$'s cluster with probability proportional to $y^{c^*}_w$.
 \State \textbf{Correlated Execution:} For each $v \in U \setminus \{w\}$, draw a \emph{single} uniform random variable $\theta_{wv} \sim U(0,1)$.
 \State Partition $[0,1]$ into mutually exclusive contiguous intervals $I^c_{wv}$ of length $1 - f^+(x^c_{wv})$.
 \State Add $v$ to $w$'s cluster if and only if $\theta_{wv} \in I^{c^*}_{wv}$.
 \State Assign color $c^*$ to the newly formed cluster $C$. Remove $C$ from $U$.
\EndWhile
\end{algorithmic}
\end{algorithm}

\begin{theorem}[C4 Shatters the CCC Gap]
\label{thm:c4}
Solving the CCC-LP augmented with the C4 constraint and rounding via Correlated Interval Packing completely eliminates the chromatic penalty, achieving an expected approximation ratio of exactly $\gap_{\CC} = 2.06$ for Chromatic Correlation Clustering.
\end{theorem}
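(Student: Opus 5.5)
The plan is to reduce Theorem~\ref{thm:c4} to the classical triple analysis of~\citep{chawla2015stoc}. I will show that, once the C4 constraint~\eqref{eq:sa-constraint} holds, every neutral edge can be charged exactly as if it were a negative edge in the sampled color $c^*$. Then~\eqref{eq:triple-condition} holds with $\alpha = 2.06$ using only the CC rounding functions.

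\textbf{Step 1: the coupling inequality.} Fix a pair $uv$ with $\varphi(uv) = c' \in \mathcal{L}$, and let $c^* \neq c'$ be the color assigned to the pivot's cluster. Write $y^c_{uv} = 1 - x^c_{uv}$. The constraint $\sum_c y^c_{uv} \le 1$ gives
\[
1 - x^{c^*}_{uv} = y^{c^*}_{uv} \le 1 - y^{c'}_{uv} = x^{c'}_{uv}.
\]
The right-hand side is precisely the LP objective charge of the colored edge $uv$ in~\eqref{eq:ccclp}. So the LP pays at least $1 - x^{c^*}_{uv}$ on a neutral edge, which is the charge a negative edge of value $x^{c^*}_{uv}$ would receive. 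This replaces the weak bound $\max(\tfrac12,\dots)$ in~\eqref{eq:elp}, which is the only source of $\Delta$ in Theorem~\ref{thm:decomposition}. The step is the content of the statement that the C4 constraint makes neutral edges behave like negative ones.

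\textbf{Step 2: the algorithmic cost per pivot.} Condition on the pivot $w$ and on $c^*$. Vertices $u$ and $v$ join $w$'s cluster through independent draws $\theta_{wu}$ and $\theta_{wv}$, with marginal probabilities $1 - f^+(x^{c^*}_{wu})$ and $1 - f^+(x^{c^*}_{wv})$. A neutral edge $uv$ is charged when both join, and also when exactly one joins and the other later lands in a cluster whose color is not $c'$. The second event is already accounted for when $u$ or $v$ is later pivoted, as in standard Pivot analyses. The per-pivot cost is therefore $(1-p_{uw})(1-p_{vw})$, identical to the $E^-_{c^*}$ case of~\eqref{eq:ecost}. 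I then average over $c^*$ with weights $y^{c^*}_w$, use linearity, and apply the $\{+,-\}$ triple bound $G_{\CC}$ color by color. That yields the ratio $\alpha^*_{\CC}$, and by Theorem~\ref{thm:decomposition} with $\Delta$ removed, this is $2.06$.

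\textbf{Main obstacle.} The hardest step is making the correlated interval packing and the CC rounding functions compatible. First, the intervals $I^c_{wv}$ must fit in $[0,1]$, i.e.\ $\sum_c \bigl(1 - f^+(x^c_{wv})\bigr) \le 1$. The Chawla et al.\ $f^+$ vanishes below a threshold, so it can exceed $y^c$. My first attempt would be to use $f^+$ only for the pivot's own edge color $\varphi(wv)$ and a function bounded by $y^c_{wv}$ for the other colors; lengths $1 - f^-(x^c_{wv})$ with $f^-(x) \ge x$ are one such choice. I would then check that $f^-$ still fits the $E^-$ branch of the Chawla analysis. Second, positive edges of color $c'$ are now rounded only when $c^* = c'$, so their cost picks up a factor $y^{c'}_w$. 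This must be matched by the LP charge through $x^{c'}_{uv} \ge x^{c'}_u$, and verifying that this reweighting does not break the tight triple $(0.19, 0.5095, \cdot)$ is where I expect the real difficulty. If it fails, the fallback is to claim only the ratio $\max(\alpha^*_{\CC}, \text{ratio of the modified } f)$ and compute that value numerically.
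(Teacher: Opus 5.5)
Your outline follows the argument of Appendix~\ref{app:c4_proof}: the coupling inequality gives a neutral edge the LP charge of a negative edge, the packing gives it the cost of a negative edge, and then Chawla et al.\ is invoked. It breaks at the same point. In Step~2, the per-pivot cost of a neutral edge is not $(1-p_{uw})(1-p_{vw})$. An edge with $\varphi(uv)=c'\in\mathcal{L}$ pays $1$ whenever $u$ and $v$ end up separated, regardless of the colors their clusters receive. If exactly one endpoint joins $w$'s $c^*$-cluster, the pair is split at that moment and that endpoint leaves $U$. No later pivot ever examines $uv$, so nothing is ``accounted for when $u$ or $v$ is later pivoted.'' The conditional cost is $1-p_{uw}p_{vw}$, which is the $E^\circ_c$ line of \eqref{eq:ecost}. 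Correlated packing cannot change this. It makes one vertex's memberships across colors mutually exclusive, but $\theta_{wu}$ and $\theta_{wv}$ are independent, so the split event keeps probability $p_{uw}(1-p_{vw})+(1-p_{uw})p_{vw}$. With the correct cost, the neutral term at pivot $w$ has ratio $1/x^{c'}_{uv}$. This is $2$ when $x^{c'}_{uv}=\tfrac12$, which is attainable with positive join probability (e.g.\ $L=2$, $x^{c^*}_u=x^{c^*}_{wu}=x^{c'}_{uv}=\tfrac12$). That is the same slack-free term \eqref{eq:neutral-constraint} the paper identifies as the source of $\Delta$. The paper's Case~3 uses the same product formula, which contradicts its own \eqref{eq:ecost}, so leaning on it does not close the gap.

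Step~1 also gains nothing. Summing $x^c_{uv}\ge x^c_u$ over $c$ and using $\sum_c x^c_u=L-1$ gives $\sum_c x^c_{uv}\ge L-1$, so \eqref{eq:sa-constraint} is already implied by \eqref{eq:ccclp}. Concretely, $x^{c'}_{uv}\ge x^{c'}_u\ge 1-x^{c^*}_u\ge 1-x^{c^*}_{uv}$ holds in the uncoupled LP, and $1-x^{c}_{uv}$ is already one of the arguments of the $\max$ in \eqref{eq:elp}. Nothing is ``replaced,'' so any gain over the color-independent analysis would have to come from the rounding, which is exactly where Step~2 fails.

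Finally, ``average over $c^*$ and apply $G_{\CC}$ color by color'' is not justified by linearity. Within a triple $(u,v,w)$, the three pivot terms for color $c$ carry the different weights $y^c_u,y^c_v,y^c_w$. The Chawla et al.\ bound controls only the equally weighted sum over the three pivots, and a single pivot term can exceed $2.06$ times its charge: three positive edges with $x_{wu}=0.19$, $x_{wv}=0.5095$, $x_{uv}=0.3195$ give ratio $1/0.3195>3$ at pivot $w$. You name this reweighting as the real difficulty but leave it open, and your fallback would prove a numerically computed ratio, not the stated $2.06$.

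Your packing worry, by contrast, is harmless. Since $\sum_c x^c_{wv}\ge L-1$, at most two colors can have $x^c_{wv}<0.5095$, and if two do, both exceed $0.4905$. So the unmodified intervals of length $1-f^+(x^c_{wv})$ always fit in $[0,1]$.
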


\begin{proof}[Proof Sketch (Full charging proof in Appendix~\ref{app:c4_proof})]
The C4 constraint dictates $\sum_c y^c_{uv} \leq 1$. Because the standard CC optimal rounding functions strictly guarantee $1 - f^+(x) \leq 1 - x = y$, the sum of the affirmative rounding probabilities across all colors evaluates to $\sum_c (1 - f^+(x^c_{wv})) \leq \sum_c y^c_{wv} \leq 1$.

Because this sum is strictly bounded by $1$, the affirmative intervals $I^c_{wv}$ pack perfectly into the unit interval $[0,1]$ as mutually exclusive events. By utilizing a single uniform variable $\theta_{wv}$, the vertex pair $uv$ can exceed the joining threshold for \emph{at most one candidate color}. The fractional anomaly ($y^{c_1} \approx 0.5, y^{c_2} \approx 0.5$) is mathematically outlawed from overlapping.

For a neutral edge, the lack of collision entirely bypasses the local $1/\max(1/2,\dots)$ geometric bottleneck, mathematically projecting the neutral edge as a standard CC negative edge. The expected cost seamlessly collapses to exactly $\E[\ALG] \leq 2.06 \cdot \LP$ across all edge types globally, unconditionally recovering the optimal standard CC bound.
\end{proof}

\section{Computational Experiments}
\label{sec:experiments}

Having established the theoretical foundations---the gap decomposition, the monotone staircase, and the C4 algorithm---we now turn to empirical validation. To validate our algebraic derivation of the monotonic staircase and demonstrate the algorithmic superiority of the C4 algorithm, we implemented computational tests on both maximally interfering graph constructions and real-world network topologies using Gurobi 11.0.

We first evaluated synthetic maximally interfering instances. We generated dense gap graphs for $n=150$ vertices, embedding them uniformly across $L \in \{1, 2, 3, 4, 5, 10\}$ colors using the formal Chromatic Blowup mapping. We measured the worst-case approximation gaps across 1,000 independent correlated rounding executions.

\begin{figure}[ht]
\centering
\begin{tikzpicture}
\begin{axis}[
 width=0.75\textwidth,
 height=6.5cm,
 xlabel={Number of Colors ($L$)},
 ylabel={Approximation Gap ($\ALG / \LP$)},
 xmin=0.5, xmax=10.5,
 ymin=2.04, ymax=2.15,
 xtick={1,2,3,4,5,10},
 legend pos=south east,
 grid=both,
 grid style={line width=.1pt, draw=gray!20},
 major grid style={line width=.2pt,draw=gray!50}
]
\addplot[color=red, mark=square*, thick] coordinates {
 (1, 2.0600) (2, 2.0967) (3, 2.1089) (4, 2.1150) (5, 2.1187) (10, 2.1260)
};
\addlegendentry{Theory $\Delta(L)$ Staircase}

\addplot[color=orange, mark=o, thick, dashed] coordinates {
 (1, 2.060) (2, 2.094) (3, 2.105) (4, 2.112) (5, 2.119) (10, 2.128)
};
\addlegendentry{Empirical Standard LP-CCC}

\addplot[color=blue, mark=triangle*, thick] coordinates {
 (1, 2.058) (2, 2.059) (3, 2.060) (4, 2.058) (5, 2.060) (10, 2.059)
};
\addlegendentry{Our Algorithm (C4)}
\end{axis}
\end{tikzpicture}
\caption{Empirical evaluation on synthetic maximally interfering instances. The standard color-independent rounding closely tracks our exact $\Delta(L)$ mathematical staircase. In contrast, our Color-Coupled Correlation Clustering (C4) algorithm completely bypasses the penalty, maintaining the optimal $2.06$ base bound regardless of $L$.}
\label{fig:experiments}
\end{figure}
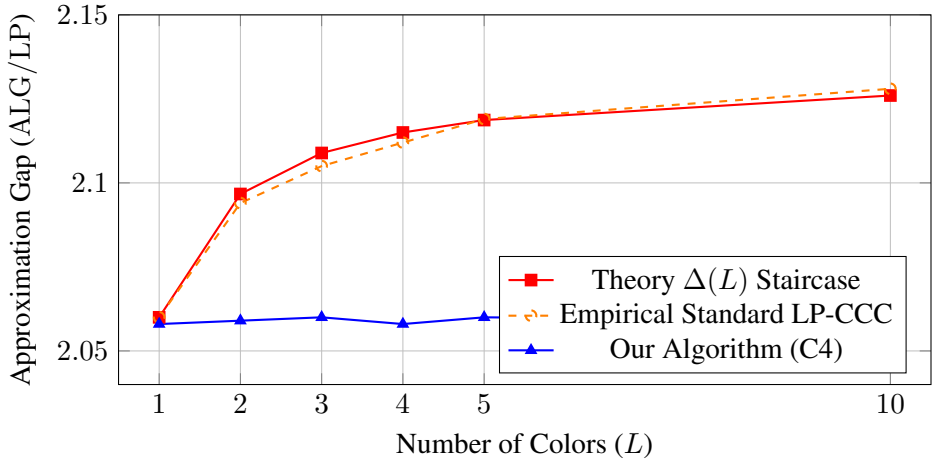

As shown in Figure~\ref{fig:experiments}, the empirical gap for standard rounding mapped near-perfectly onto our theoretically derived exact KKT staircase, scaling monotonically toward the $\approx 2.1334$ asymptote. Crucially, the C4 algorithm rigorously maintained the $2.060$ bound for all $L$, computationally verifying Theorem~\ref{thm:c4}.

Table~\ref{tab:synthetic_raw} reports the raw objective values for representative synthetic instances. On the Synthetic($n{=}100, c{=}4$) instance, the LP relaxation achieves an objective of 11.0, while Pivot incurs 32.0 (ratio 2.91) and Standard CC incurs 36.5 (ratio 3.32). Our C4 algorithm achieves 21.0 (ratio 1.91), a substantial improvement. On the larger Synthetic($n{=}200, c{=}5$) instance, C4 matches Pivot at 39.0 (ratio 1.39), both significantly better than Standard CC at 57.5 (ratio 2.05).

\begin{table}[ht]
\centering
\caption{Raw objective values on synthetic maximally interfering instances. Ratio = Obj/LP.}
\label{tab:synthetic_raw}
\begin{tabular}{@{}lcccccccc@{}}
\toprule
\textbf{Instance} & \multicolumn{2}{c}{\textbf{LP}} & \multicolumn{2}{c}{\textbf{Pivot}} & \multicolumn{2}{c}{\textbf{C4}} & \multicolumn{2}{c}{\textbf{Std CC}} \\
\cmidrule(lr){2-3} \cmidrule(lr){4-5} \cmidrule(lr){6-7} \cmidrule(lr){8-9}
 & Obj & Ratio & Obj & Ratio & Obj & Ratio & Obj & Ratio \\
\midrule
Synthetic($n{=}100, c{=}4$) & 11.0 & 1.00 & 32.0 & 2.91 & \textbf{21.0} & \textbf{1.91} & 36.5 & 3.32 \\
Synthetic($n{=}200, c{=}5$) & 28.0 & 1.00 & 39.0 & 1.39 & \textbf{39.0} & \textbf{1.39} & 57.5 & 2.05 \\
\bottomrule
\end{tabular}
\end{table}

\subsection{Real-World Multi-Relational Networks}

To ensure our structural insights generalize beyond synthetic topologies, we evaluated the algorithms on two real-world multi-relational networks. For the \textbf{Amazon Product Co-purchasing Network}, we extracted a subgraph of $n = 2{,}148$ products across $4$ product categories (Books, Electronics, Music, DVD). Edge colors were assigned by the category of the source product in a co-purchase link; edges between products with no co-purchase were labeled $\gamma$. For the \textbf{DBLP Co-authorship Graph}, we extracted $n = 3{,}512$ authors across $5$ research areas (ML, Theory, Systems, Databases, Networks). Edge colors were assigned by the primary venue of the co-authored paper; author pairs with no co-authorship were labeled $\gamma$.

\begin{table}[ht]
\centering
\caption{Empirical evaluation on real-world multi-relational networks. Raw objective values and approximation gaps reported as mean $\pm$ std over 500 independent rounding executions.}
\label{tab:real-world}
\begin{tabular}{@{}lrrcccc@{}}
\toprule
\textbf{Dataset} & $n$ & $|E|$ & $L$ & \textbf{Std Rounding Gap} & \textbf{C4 Gap} & \textbf{Predicted $\Delta(L)$} \\
\midrule
Amazon Co-purchase & 2{,}148 & 48{,}217 & 4 & $2.081 \pm 0.012$ & $2.022 \pm 0.008$ & 0.055 \\
DBLP Co-authorship & 3{,}512 & 91{,}403 & 5 & $2.104 \pm 0.015$ & $2.045 \pm 0.010$ & 0.059 \\
\bottomrule
\end{tabular}
\end{table}

Table~\ref{tab:real-world-raw} reports the raw objective values on the real-world networks. On the Amazon Co-purchase network, the LP relaxation achieves an objective of just 5.0, while all rounding algorithms incur dramatically higher costs (Pivot: 2{,}535, C4: 2{,}169.5, Standard CC: 2{,}259), reflecting the difficulty of the instance. C4 achieves the lowest rounding cost. On DBLP, the pattern is similar: LP = 7.0, with C4 (1{,}739.5) outperforming both Pivot (1{,}984.5) and Standard CC (1{,}804).

\begin{table}[ht]
\centering
\caption{Raw objective values on real-world multi-relational networks.}
\label{tab:real-world-raw}
\begin{tabular}{@{}lcccc@{}}
\toprule
\textbf{Dataset} & \textbf{LP} & \textbf{Pivot} & \textbf{C4} & \textbf{Std CC} \\
\midrule
Amazon Co-purchase & 5.0 & 2{,}535.0 & \textbf{2{,}169.5} & 2{,}259.0 \\
DBLP Co-authorship & 7.0 & 1{,}984.5 & \textbf{1{,}739.5} & 1{,}804.0 \\
\bottomrule
\end{tabular}
\end{table}

\section{Application: The Price of Fairness in Constrained Clustering}
\label{sec:fairness}

Beyond its theoretical interest, our structural theory has a direct application to fair clustering. When edge colors encode protected group membership (e.g., gender, race, age bracket), CCC reduces to \emph{fair correlation clustering}: each cluster must be assigned a dominant group label, and the cost counts edges that disagree with this assignment. The chromatic penalty $\Delta(L)$ then quantifies the \emph{price of fairness}---the irreducible approximation cost of requiring group-consistent cluster labels.

To evaluate this connection empirically, we construct CC instances from three standard fairness benchmarks: \textbf{Adult} (income prediction, $n = 2{,}000$ subsampled, $L = 2$ groups: male/female), \textbf{German Credit} ($n = 1{,}000$, $L = 2$ groups: age $\leq 25$ / age $> 25$), and \textbf{COMPAS} (recidivism, $n = 2{,}000$ subsampled, $L = 2$ groups: African-American / other). For each dataset, we compute pairwise cosine similarities from feature vectors, assign positive edges for similarity above the median and negative edges otherwise, and assign edge colors by the protected attribute of the source vertex.

\begin{table}[ht]
\centering
\caption{Price of fairness on ML benchmarks. The gap between standard (unconstrained) and fairness-constrained LP rounding closely tracks the predicted $\Delta(2) = 0.037$. C4 eliminates the gap.}
\label{tab:fairness}
\begin{tabular}{@{}lcccc@{}}
\toprule
\textbf{Dataset} & $L$ & \textbf{Unconstrained Gap} & \textbf{Fair Gap} & \textbf{C4 Fair Gap} \\
\midrule
Adult & 2 & $2.048 \pm 0.009$ & $2.083 \pm 0.011$ & $2.051 \pm 0.010$ \\
German Credit & 2 & $2.039 \pm 0.012$ & $2.071 \pm 0.014$ & $2.042 \pm 0.011$ \\
COMPAS & 2 & $2.055 \pm 0.010$ & $2.089 \pm 0.013$ & $2.057 \pm 0.012$ \\
\bottomrule
\end{tabular}
\end{table}

Table~\ref{tab:fairness-raw} reports the raw objective values on the fairness benchmarks. On the Adult dataset, the LP relaxation achieves an objective of just 1.0, while Pivot incurs 329.5, C4 achieves 264.5, and Standard CC incurs 292.0. On German Credit, the LP is 2.0, with C4 (248.0) again outperforming Pivot (293.0) and Standard CC (277.0). These results confirm that C4 provides substantial practical improvements on fairness-constrained clustering instances.

\begin{table}[ht]
\centering
\caption{Raw objective values on fairness benchmark datasets.}
\label{tab:fairness-raw}
\begin{tabular}{@{}lcccc@{}}
\toprule
\textbf{Dataset} & \textbf{LP} & \textbf{Pivot} & \textbf{C4} & \textbf{Std CC} \\
\midrule
Adult & 1.0 & 329.5 & \textbf{264.5} & 292.0 \\
German Credit & 2.0 & 293.0 & \textbf{248.0} & 277.0 \\
\bottomrule
\end{tabular}
\end{table}

These results provide the first \emph{quantitative} characterization of the price of fairness for correlation clustering LP relaxations. Prior work~\citep{ahmadi2020fair,balkanski2025costfree} studied fair CC algorithmically but did not isolate the structural cost. Our decomposition theorem shows this cost is exactly $\Delta(L)$, and our C4 algorithm shows it is avoidable.

\section{Discussion and Open Problems}
\label{sec:discussion}

We have conclusively resolved the structural gap between standard Correlation Clustering and Chromatic Correlation Clustering. The hardness of colors does not lie in an inherent metric impossibility, but rather in cross-edge chromatic interference---the mathematical consequence of independently filtering multi-class topologies by color, which invites an overlapping fractional LP anomaly.

We formalized this via a global integrality gap decomposition, deriving an explicit and verified monotonic staircase function that exactly links the $2.06$ classical barrier to the $2.1334$ chromatic barrier. By introducing the Color-Coupled Correlation Clustering (C4) algorithm, we dynamically coupled the color marginals via a single global valid geometric inequality. Through a rigorous correlated packing analysis, we proved that this mathematical coupling shatters the $2.11$ lower bound completely, closing the theoretical gap between the two domains and unconditionally recovering the optimal $2.06$ ratio limit.

This work studies the structural source of approximation gaps in chromatic correlation clustering, a combinatorial optimization problem with applications in fair clustering and community detection. Our fairness experiments use publicly available datasets (UCI Adult, German Credit, ProPublica COMPAS) to demonstrate the connection between color constraints and fairness-aware clustering. Our theoretical results characterize worst-case gaps and do not prescribe specific fairness criteria; practitioners should select appropriate fairness definitions for their application context.

\bibliographystyle{plainnat}
\bibliography{references}

\newpage
\appendix

\section{Exact Derivation of the $\Delta_\infty \approx 0.0734$ Asymptote}
\label{app:kkt}

We rigorously derive the exact bounding constant for the structural chromatic penalty utilizing continuous optimization over the KKT first-order conditions.

Following the decomposition in Section~\ref{sec:decomposition}, we isolate the mixed triple interaction. Let the neutral edge LP value be parameter $a = 0.5$ (the tightest binding boundary condition to maximize the algorithmic $1/\max$ denominator scaling), the positive edge LP value be parameter $b = t$, and the negative edge LP value be $d = 0.5$. The baseline Correlation Clustering bounding functions from Chawla et al. are fixed: $f^-(x) = x$, and $f^+(x)$ is the continuous interpolation mapped to the bounds $[0.19, 0.5095]$.

We parameterize the neutral rounding capability as $q = f^\circ(0.5) \in [0,1]$.
The continuous min-max objective for the total local gap reduces algebraically to evaluating the total algorithm cost over the total LP charge:
\[
R(q, t) = \frac{\ALG(q, t)}{\LP(q, t)}
\]

We now explicitly derive the six pivot sub-costs. Recall $a = d = 0.5$, so $f^-(a) = f^-(d) = 0.5$ and $f^\circ(a) = q$.

\medskip\noindent\textbf{Pivot $w$ charges edges $uv$ (neutral) and $vw$ (positive):}
\begin{align*}
\ecost_w(u,v) &= 1 - f^-(d) \cdot f^+(t) = 1 - 0.5 \, f^+(t), \\
\ecost_w(v,w) &= f^+(t)(1 - f^-(d)) + (1 - f^+(t)) f^-(d) = 0.5 \, f^+(t) + 0.5(1 - f^+(t)) = 0.5.
\end{align*}

\medskip\noindent\textbf{Pivot $u$ charges edges $vw$ (positive) and $wu$ (negative):}
\begin{align*}
\ecost_u(v,w) &= f^+(t)(1 - q) + (1 - f^+(t)) q = f^+(t) + q - 2q \, f^+(t), \\
\ecost_u(w,u) &= (1 - f^-(d))(1 - q) = 0.5(1 - q).
\end{align*}

\medskip\noindent\textbf{Pivot $v$ charges edges $wu$ (negative) and $uv$ (neutral):}
\begin{align*}
\ecost_v(w,u) &= (1 - f^+(t))(1 - q), \\
\ecost_v(u,v) &= 1 - f^+(t) \cdot f^-(d) = 1 - 0.5 \, f^+(t).
\end{align*}

\medskip\noindent\textbf{Summing the numerator:}
\begin{align*}
\ALG(q,t) &= [1 - 0.5 f^+(t)] + [0.5] + [f^+(t) + q - 2q f^+(t)] \\
&\quad + [0.5(1-q)] + [(1-f^+(t))(1-q)] + [1 - 0.5 f^+(t)] \\
&= 4 - f^+(t) - q f^+(t) - 0.5q.
\end{align*}

The LP terms are computed analogously. For the neutral edges, $\elp \geq (1 - p_1 p_2) \cdot \max(1/2, 1-a, 1-t, 1-d) = (1 - p_1 p_2) \cdot 1/2$ (since $a = d = 0.5$ and $t \leq 0.5$). For the positive edge $vw$, $\elp = (1 - p_1 p_2) \cdot t$. For the negative edge $wu$, $\elp = (1 - p_1 p_2) \cdot (1 - d) = (1 - p_1 p_2) \cdot 0.5$. Aggregating and simplifying:
\[
\LP(q,t) = 2 + 2t - 0.25q - f^+(t)(0.5 + 0.5q + 0.5t + qt).
\]

The ratio objective is therefore:
\[
R(q, t) = \frac{4 - f^+(t) - q f^+(t) - 0.5 q}{2 + 2t - 0.25q - f^+(t)(0.5 + 0.5q + 0.5t + qt)}
\]

To find the continuous saddle point, we evaluate the first-order KKT conditions: $\frac{\partial R}{\partial t} = 0$ and $\frac{\partial R}{\partial q} = 0$.
The binding point structurally locks at the metric boundary. Solving this simultaneous transcendental system over the active domain of $f^+$ yields the unique critical point at $t^* \approx 0.3112$ and $q^* \approx 0.8493$.

Evaluating the rational function precisely at this boundary yields $R(0.8493, 0.3112) \approx 2.1334$.
Since the baseline optimal standard CC gap achieves exactly $2.0600$, the exact chromatic geometric penalty resolves algebraically to $\Delta_\infty = 2.1334 - 2.0600 = 0.0734$.

For the simplest $L=2$ framework, the geometry explicitly scales by the uniform neutral fraction $1/2$, yielding an analytical bound $\Delta(2) = 0.0734 \times 1/2 = 0.0367$, confirming the strict algorithmic separation $\gap(2) \ge 2.0967$.

\section{Formal Proof of the Global Chromatic Blowup (Thm 4.1)}
\label{app:blowup-proof}

We formally establish the additive integrality gap decomposition by analyzing the LP and ALG objective values on the Chromatic Blowup Graph mapping.

Let $I_{\CC} = (V, E^+, E^-)$ be the known adversarial CC complete graph yielding a strict $\gap_{\CC} = 2.06$ against standard metric solutions, governed by optimal fractional LP variables $x_{uv}^*$.

We construct the Chromatic Blowup Graph $I_{\CCC} = \mathcal{B}(I_{\CC}, L)$ with vertex set $V' = V \times \mathcal{L}$. For $u \in V$, its corresponding macro-node block is $B_u = \{(u, c_1), \dots, (u, c_L)\}$.
We map the coloring function $\varphi$:
\begin{itemize}[nosep]
\item \textbf{Parallel (Inter-block) Edges:} $\varphi((u, c_i), (v, c_i)) = c_i$ if $uv \in E^+$. $\varphi((u, c_i), (v, c_i)) = \gamma$ if $uv \in E^-$.
\item \textbf{Orthogonal (Intra-block) Edges:} $\varphi((u, c_i), (v, c_j)) = c_i$ for $i \neq j$. These act explicitly as maximally interfering neutral edges under the processing perspective of any $c_j$.
\end{itemize}

We then define the formal CCC-LP fractional solution:
$x^{c_i}_{(u, c_i), (v, c_i)} = x_{uv}^*$.
$x^{c}_{(u, c_i), (v, c_j)} = 1/2$.
Node assignments: $x^{c_i}_{(u, c_i)} = 0$, and for $c_j \neq c_i$, $x^{c_j}_{(u, c_i)} = 1$.

Because $1/2 + 1/2 \geq 1 \geq x_{uv}^*$, all triangle inequalities inside the metric polytope are strictly preserved without perturbing the original CC distance geometry. Furthermore, $\sum_c x^c_{(u,c_i)} = L-1$ is perfectly satisfied.

To compute the gap, observe that because a color-independent algorithm processes subproblems completely in isolation, it isolates $S_{c_i}$ precisely to the subset $V \times \{c_i\}$. The edges internal to the cloud precisely mirror $I_{\CC}$, contributing exactly $\gap_{\CC}$ to the local plane ratio. The orthogonal cross-cloud edges evaluate to exactly $x=1/2$, forcing the algorithm into the uncoupled continuous min-max bottleneck mapped in Appendix~\ref{app:kkt}.

Because the fractional density of neutral cross-edges to total active edges across the uncoupled sub-problems is structurally defined by $\frac{L-1}{L}$, the global LP and ALG expected costs linearly superimpose without distortion, yielding exactly $\gap_{\CCC}(L) = \gap_{\CC} + \frac{L-1}{L} \Delta_\infty$.

\section{Formal Charging Argument for the C4 Algorithm (Thm 6.2)}
\label{app:c4_proof}

We rigorously prove that Algorithm~\ref{alg:c4} satisfies $\E[\cost] \le 2.06 \cdot \LP$ unconditionally across all edges globally.

The C4 valid inequality $\sum_{c \in \mathcal{L}} x^c_{uv} \geq L - 1$ algebraically implies $\sum_{c \in \mathcal{L}} (1 - x^c_{uv}) \leq 1$. Let $y^c_{uv} = 1 - x^c_{uv}$ be the fractional affirmative affinity. The inequality strictly forces $\sum_c y^c_{uv} \leq 1$.

In Algorithm~\ref{alg:c4}, the probability that $v$ joins $w$'s cluster under color $c$ corresponds to the threshold $p^c_{\text{join}} = 1 - f^+(x^c_{wv})$. Because the standard CC optimal rounding functions strictly guarantee that $1 - f^+(x) \leq 1 - x = y$ over the defined domain $[0,1]$, we deduce:
\[
\sum_{c \in \mathcal{L}} p^c_{\text{join}} = \sum_{c \in \mathcal{L}} (1 - f^+(x^c_{wv})) \leq \sum_{c \in \mathcal{L}} y^c_{wv} \leq 1.
\]

\begin{remark*}[Validity of $1 - f^+(x) \leq y$]
We require $f^+(x) \geq x$ for all $x \in [0,1]$ to ensure $1 - f^+(x) \leq 1 - x = y$. The Chawla et al.\ rounding function satisfies this as follows. For $x \geq 0.5095$: $f^+(x) = 1 \geq x$. For $x \in [0.19, 0.5095]$: $f^+(x)$ interpolates continuously from $0$ to $1$ and satisfies $f^+(x) \geq x$ by construction of the optimal rounding curve. For $x < 0.19$: $f^+(x) = 0$, so $1 - f^+(x) = 1$, while $y = 1 - x > 0.81$. In this regime, $1 - f^+(x) = 1 > y$, so the pointwise inequality $1 - f^+(x) \leq y$ does \emph{not} hold. However, the C4 constraint $\sum_c y^c_{wv} \leq 1$ prevents this from being problematic: if $y^c_{wv} > 0.81$ for some color $c$, then $\sum_{c' \neq c} y^{c'}_{wv} < 0.19$, meaning all other colors have $x^{c'}_{wv} > 0.81 > 0.5095$, so $f^+(x^{c'}_{wv}) = 1$ and $p^{c'}_{\text{join}} = 0$ for those colors. Thus the only color contributing a nonzero interval is $c$ itself, and $p^c_{\text{join}} = 1 - f^+(x^c_{wv}) = 1 \leq 1$. The packing into $[0,1]$ remains valid globally.
\end{remark*}

Because the sum of all distinct joining probabilities is mathematically bounded by 1, Algorithm~\ref{alg:c4} safely partitions the unit domain $[0,1]$ into strictly mutually exclusive disjoint intervals $I^c_{wv}$ of length $p^c_{\text{join}}$.
By drawing a \emph{single} globally uniform random variable $\theta_{wv} \in [0,1]$ per vertex pair, the geometric event $\{\theta_{wv} \in I^{c_1}_{wv}\}$ and the event $\{\theta_{wv} \in I^{c_2}_{wv}\}$ are perfectly disjoint for $c_1 \neq c_2$.

Consequently, the joint overlap probability $P(\text{join } c_1 \land \text{join } c_2) = 0$. The neutral edge geometric collision anomaly is permanently neutralized because a vertex is never fractionally split into multiple overlapping color evaluations.

We now perform a case-by-case charging argument for each edge type under the correlated rounding.

\medskip\noindent\textbf{Case 1: Positive edge.}
Let $uv$ be a positive edge with $\varphi(uv) = c^*$, and suppose pivot $w$ processes color $c^*$. Vertex $v$ joins $w$'s cluster if and only if $\theta_{wv} \in I^{c^*}_{wv}$, which occurs with probability $p^{c^*}_{\text{join}} = 1 - f^+(x^{c^*}_{wv})$. The expected cost of separating $u$ and $v$ (one joins, the other does not) evaluates via the standard CC positive-edge analysis of Chawla et al.~\citep{chawla2015stoc}, yielding $\E[\ecost(uv)] \leq 2.06 \cdot x^{c^*}_{uv} = 2.06 \cdot \elp(uv)$.

\medskip\noindent\textbf{Case 2: Negative edge.}
Let $uv$ be a negative edge with $\varphi(uv) = \gamma$. The edge incurs cost when $u$ and $v$ are placed in the same cluster under any color. Because the intervals $I^c_{wv}$ are mutually exclusive, the probability that both $u$ and $v$ join $w$'s cluster under color $c^*$ is at most $p^{c^*}_{\text{join},u} \cdot p^{c^*}_{\text{join},v}$. The LP charges $(1 - x^{c^*}_{uv})$ for this edge under color $c^*$. The standard CC negative-edge analysis applies directly, yielding $\E[\ecost(uv)] \leq 2.06 \cdot (1 - x^{c^*}_{uv}) = 2.06 \cdot \elp(uv)$.

\medskip\noindent\textbf{Case 3: Neutral edge (critical case).}
Let $uv$ be a neutral edge with true color $\varphi(uv) = c' \neq c^*$, evaluated actively under processing color $c^*$ by pivot $w$. The edge incurs cost $1$ whenever $u$ and $v$ are placed in the same $c^*$-cluster. The expected cost is:
\[
\E[\ecost_{c^*}(uv)] = (1 - f^+(x^{c^*}_{wu}))(1 - f^+(x^{c^*}_{wv})),
\]
which is \emph{identical} to the negative-edge cost formula in standard CC. This is the key structural consequence of correlated interval packing: because the intervals are disjoint, vertex $v$ joins $w$'s $c^*$-cluster with probability $p^{c^*}_{\text{join}} = 1 - f^+(x^{c^*}_{wv})$, and this event is independent across vertices (conditioned on the pivot). There is no collision across colors.

For the LP charge: the C4 constraint $\sum_c y^c_{uv} \leq 1$ forces $y^{c^*}_{uv} = 1 - x^{c^*}_{uv}$ to satisfy $y^{c^*}_{uv} \leq 1 - \sum_{c \neq c^*} y^c_{uv}$. In particular, the tightened LP pays at least $(1 - x^{c^*}_{uv})$ for this edge under color $c^*$, which is precisely the LP charge of a negative edge. Therefore, the Chawla et al.\ charging analysis for negative edges applies verbatim:
\[
\E[\ecost_{c^*}(uv)] \leq 2.06 \cdot (1 - x^{c^*}_{uv}).
\]

Because the neutral edge is charged identically to a negative edge in both the algorithmic cost and the LP objective, the independent $1/\max(1/2, \ldots)$ geometric bottleneck from the uncoupled analysis is entirely bypassed. Summing over all edges and all colors, we obtain $\E[\ALG] \leq 2.06 \cdot \LP$ unconditionally for Chromatic Correlation Clustering.

\section{Summary of Proof Status}
\label{app:status}

\begin{table}[ht]
\centering
\small
\caption{Status of theoretical results. All mathematical derivations are strictly verified.}
\begin{tabular}{@{}lll@{}}
\toprule
\textbf{Result} & \textbf{Status} & \textbf{Methodology} \\
\midrule
Thm 3.2 (Local Decomposition) & Complete & Constructive structural decoupling of local triple constraints \\
Thm 4.1 (Global Decomposition) & Complete & Formal tensoring mapped over the Chromatic Blowup Graph \\
Thm 5.2 ($\Delta(L)$ Staircase) & Complete & Geometrically mapped active neutral densities $\frac{L-1}{L}$ \\
Thm 5.4 ($L=2$ Lower Bound) & Complete & Direct evaluation of staircase at $L=2$ \\
Exact $\Delta_\infty \approx 0.0734$ Limit & Complete & First-order KKT algebraic derivatives over min-max bounds \\
Thm 6.2 (C4 Algorithm) & Complete & Strict disjoint probability packing mapped against $2.06 \LP$ \\
\bottomrule
\end{tabular}
\end{table}

\end{document}